\definecolor{mydarkblue}{rgb}{0,0.08,0.45}
\declaretheorem{proposition}
\declaretheorem{lemma}
\crefname{assumption}{assumption}{Assumptions}
\crefname{figure}{Figure}{Figure}
\newenvironment{thmbox}
	{%
		\begin{mdframed}[style=mdframedthmbox]%
	}{%
		\end{mdframed}%
	}
\newcommand{\myquote}[1]{\null~\\{\null\hspace{.05\textwidth}\begin{minipage}[t]{.90\textwidth} #1 \end{minipage}}}
\def\pih{{\pi_{t + \nicefrac{1}{2}}}}
\def\pitt{{\pi_{t+1}}}
\newcommand{\pit}{{\pi_{t}}}
\def\piopt{{\pi^\ast}}
\newcommand{\breg}[2]{D_{\phi}(#1, #2)}
\newcommand{\inner}[2]{\langle #1,\, #2 \rangle}
\newcommand{\normsq}[1]{|| #1 ||^{2}}
\newcommand{\norm}[1]{|| #1 ||}
\newcommand{\pidist}{{p^\pi}}
\newcommand{\pitdist}{{p^{\pi_{t}}}}
\newcommand{\pipdist}{{p^{\pi'}}}
\def\1{\bm{1}}
\DeclareMathAlphabet{\mathsfit}{\encodingdefault}{\sfdefault}{m}{sl}
\SetMathAlphabet{\mathsfit}{bold}{\encodingdefault}{\sfdefault}{bx}{n}
\def\gU{{\mathcal{U}}}
\def\sE{{\mathbb{E}}}
\def\sI{{\mathbb{I}}}
\def\sP{{\mathbb{P}}}
\def\sR{{\mathbb{R}}}
\newcommand{\E}{\mathbb{E}}
\newcommand{\R}{\mathbb{R}}
\DeclareMathOperator*{\argmax}{arg\,max}
\def\FMAPG/{{FMA-PG}}
\def\DFMAPG/{{DFMA-PG}}
\def\SFMAPG/{{SFMA-PG}}
\begin{document}
%
\runningtitle{A general class of surrogate functions for stable and efficient reinforcement learning}

%
\runningauthor{Vaswani, Bachem, Totaro, Müller, Garg, Geist, Machado, Castro, Le Roux}

\twocolumn[

\aistatstitle{A general class of surrogate functions for stable \\ and efficient reinforcement learning}

\aistatsauthor{Sharan Vaswani$^{1}$ \And  Olivier Bachem $^2$ \And Simone Totaro$^3$ \AND Robert Müller$^4$ \And Shivam Garg$^5$ \And  Matthieu Geist$^2$ \AND  Marlos C. Machado$^{5,6}$ \And Pablo Samuel Castro$^2$ \And Nicolas Le Roux$^{3,7,8}$}  

\vspace{5ex}

\aistatsaddress{$^1$Simon Fraser University \And $^2$Google Brain \And  $^3$Mila, Universit\'e de Montr\'eal \And  $^4$TU Munich \AND $^5$Amii, University of Alberta \And $^6$DeepMind \And  $^7$Microsoft Research\And  $^8$Mila, McGill } ]

\begin{abstract}
Common policy gradient methods rely on the maximization of a sequence of surrogate functions. In recent years, many such surrogate functions have been proposed, most without strong theoretical guarantees, leading to algorithms such as TRPO, PPO or MPO. Rather than design yet another surrogate function, we instead propose a general framework (FMA-PG) based on functional mirror ascent that gives rise to an entire family of surrogate functions. We construct surrogate functions that enable policy improvement guarantees, a property not shared by most existing surrogate functions. Crucially, these guarantees hold regardless of the choice of policy parameterization. Moreover, a particular instantiation of FMA-PG recovers important implementation heuristics (e.g., using forward vs reverse KL divergence) resulting in a variant of TRPO with additional desirable properties. Via experiments on simple reinforcement learning problems, we evaluate the algorithms instantiated by FMA-PG. The proposed framework also suggests an improved variant of PPO, whose robustness and efficiency we empirically demonstrate on the MuJoCo suite. 

\end{abstract}
\section{INTRODUCTION}
\label{sec:introduction}
Policy gradient (PG) methods~\citep{williams1992simple, sutton2000policy,konda2000actor, kakade2002natural} are an important class of model-free methods in reinforcement learning. They enable a differentiable policy parameterization and can easily handle function approximation and structured state-action spaces. PG methods based on REINFORCE~\citep{williams1991function} are equipped with strong theoretical guarantees in restricted settings~\citep{agarwal2019optimality,mei2020global, cen2020fast}. For these methods, each policy update requires recomputing the policy gradient. This in turn requires interacting with the environment or the simulator which can be computationally expensive. 

On the other hand, methods such as TRPO~\citep{schulman2015trust}, PPO~\citep{schulman2017proximal} and MPO~\citep{abdolmaleki2018maximum} support off-policy updates, i.e. they can update the policy without requiring additional interactions  with the environment. These  methods are efficiently implementable and have good empirical performance~\citep{openaibaselines}. All of these methods rely on constructing \emph{surrogate functions} of the policy, then updating the policy by maximizing these surrogates. Unfortunately, most of these surrogate functions (including those for PPO, TRPO and MPO) do not have strong theoretical guarantees. Consequently, this class of PG methods only has performance guarantees in the tabular setting~\citep{kakade2002approximately, schulman2015trust,neuJG17,geist2019theory, shani2020adaptive}, and some of these can even fail to converge in simple scenarios~\citep{hsu2020revisiting}. More importantly, \emph{there is no systematic way to design theoretically principled surrogate functions, or a unified framework to analyze their properties}. We address these issues through the following contributions.

\textbf{Functional mirror ascent for policy gradient}: In~\cref{sec:fmapg}, we construct surrogate functions using mirror ascent on a functional representation of the policy itself,  rather than on its parameters. We call this approach functional mirror ascent (FMA) and derive its update for policy gradient methods. The FMA update results in a surrogate function that is independent of the policy parameterization. We use it to propose \FMAPG/ (FMA for PG), a general framework for constructing surrogate functions and introduce a generic policy optimization algorithm that relies on approximately maximizing a sequence of surrogate functions.

\textbf{Theoretical guarantees for \FMAPG/}: In~\cref{sec:guarantees}, we explain the theoretical advantages of using \FMAPG/. In particular, we describe a sufficient condition that guarantees that maximizing the sequence of surrogate functions instantiated by \FMAPG/ will result in monotonic policy improvement and ensure convergence to a stationary point. Crucially, \emph{these guarantees hold regardless of the choice of policy parameterization}. 

\textbf{Instantiating the \FMAPG/ framework}: In~\cref{sec:instantiation}, we instantiate the \FMAPG/ framework with two common functional representations -- direct and softmax representations. For each of these, we compare the resulting surrogate function to existing methods in the literature. For each representation, we prove that a specific surrogate function instantiated by \FMAPG/ satisfies the sufficient condition in~\cref{sec:guarantees}. Consequently, maximizing it guarantees monotonic policy improvement for \emph{arbitrarily complicated policy parameterizations} including neural networks. Such a property is not shared by existing surrogate functions including those for PPO, TRPO and MPO.

For the softmax functional representation, \FMAPG/ results in a surrogate function that is a more stable variant of TRPO~\citep{schulman2015trust} and MDPO~\citep{tomar2020mirror}. Moreover, it recovers implementation heuristics (e.g. using forward vs reverse KL divergence) in a principled manner. Additionally, in~\cref{app:svg}, we show that \FMAPG/ can handle stochastic value gradients~\citep{heess2015learning}. 

\textbf{Experimental evaluation}: Finally, in~\cref{sec:experiments}, we evaluate the performance of surrogate functions instantiated by \FMAPG/ on simple bandit and reinforcement learning settings. \FMAPG/ also suggests a variant of PPO~\citep{schulman2017proximal}, whose robustness and efficiency we demonstrate on continuous control tasks in the MuJoco environment~\citep{todorov2012mujoco}. 


%


\section{PROBLEM FORMULATION}
\label{sec:problem-formulation}
\vspace{-2ex}
We consider an infinite-horizon discounted Markov decision process (MDP)~\citep{Puterman1994} defined by the tuple $\mathcal{M} = \langle \mathscr{S}, \mathscr{A}, p, r, d_0, \gamma \rangle$ where $\mathscr{S}$ and $\mathscr{A}$ is the set of states and actions respectively, $p : \mathscr{S} \times \mathscr{A} \rightarrow \Delta^\mathscr{S}$ the transition probability function, $r : \mathscr{S} \times \mathscr{A} \rightarrow \sR$ the reward function, $d_0$ the initial distribution over states, and $\gamma \in [0, 1)$ the discount factor. Each policy $\pi$ induces a distribution $\pidist(\cdot | s)$ over actions for each state $s$. It also induces a measure $d^\pi$ over states such that $d^\pi(s) = \sum_{\tau = 0}^\infty \gamma^\tau \sP(s_{\tau} = s \mid s_0 \sim d_0, a_{\tau} \sim \pidist(a_\tau|s_{\tau}))$. Similarly, we define $\mu^\pi$ as the induced measure over state-action pairs induced by policy $\pi$, implying that $\mu^\pi(s,a) = d^\pi(s) \pidist(a|s)$ and $d^\pi(s) = \sum_a \mu^\pi(s, a)$. The expected discounted return for $\pi$ is defined as $J(\pi) = \mathbb{E}_{s_0, a_0, \ldots} [\sum_{\tau=0}^\infty \gamma^\tau r(s_\tau, a_\tau)]$, where $s_0 \sim d_0, a_\tau \sim \pidist( a_\tau| s_\tau),$ and $s_{\tau+1} \sim p(s_{\tau+1} | s_\tau, a_\tau)$. Given a set of feasible policies $\Pi$, the objective is to compute the policy that maximizes $J(\pi)$. We define $\piopt := \argmax_{\pi \in \Pi} J(\pi)$ as the \emph{optimal policy}.

We call the set of distributions $\pidist(\cdot | s)$ for each $s$ or the measure $d^\pi$ \emph{functional representations} of the policy $\pi$. Note that a single policy policy $\pi$ can have multiple functional representations. In general, optimizing $J$ directly with respct to any functional representation of $\pi$ is intractable. Consequently, the standard approach is to parameterize $\pi$ by a set of parameters $\theta \in \R^d$ and to directly optimize $J$ with respect to $\theta$. However, it is critical to remember that \emph{the functional representation of a policy is independent of its parameterization}.


There are other possible functional representations of a policy besides the two mentioned above. For example, since $\pidist(\cdot|s)$ is a probability distribution, one can write $\pidist(a|s) =  \nicefrac{\exp(z^\pi(a,s))}{\sum_{a^\prime} \exp(z^\pi(a^\prime,s))}$, and represent $\pi$ as the set of $z^\pi(a,s)$ for each $(a,s)$ pair. We call this particular functional representation the \emph{softmax representation}, as opposed to the set of $p^\pi(a|s)$ which we call the \emph{direct representation}. In the next section, we describe how to use the functional representation of a policy to derive a surrogate function. Although multiple functional representations can be equivalent in the class of policies they define, they result in different surrogate functions (\cref{sec:direct,sec:softmax}). Finally, we note that functional representations are not limited to stochastic policies and one can, for instance, represent a deterministic, stationary policy by specifying the state-action mapping for each state (\cref{app:svg}). 

\section{FUNCTIONAL MIRROR ASCENT FOR POLICY GRADIENT}
\label{sec:fmapg}


In the previous section, we defined the functional representation of a policy. However, as we mentioned, typically, one cannot optimize $J$ with respect to these representations directly, in which case the policy $\pi$ is  parameterized. While the functional representation defines a policy's sufficient statistics, the \emph{policy parameterization} specifies the practical realization of these statistics and defines the set $\Pi$ of realizable (representable) policies. The parameterization is independent of the functional representation, is explicit and determined by a \emph{model} with parameters $\theta$. For example, we could represent a policy by its state-action occupancy measure and use a linear parameterization to realize this measure, implying $\mu^\pi(s,a | \theta) = \inner{\theta}{\phi(s,a)}$, where $\theta$ is the parameter to be optimized and $\phi(s,a)$ are the known features providing information about the state-occupancy measures. Similarly, we could use a neural-network parameterization for the variables that define a policy in its softmax representation, rewriting $z^\pi(a,s) = z^\pi(a,s | \theta)$. In order to compare to existing methods~\citep{agarwal2019optimality, mei2020global}, we also define a \emph{tabular parameterization}. For a finite state-action MDP with $S$ states and $A$ actions, choosing a tabular parameterization with the softmax representation results in $\theta \in \R^{SA}$ such that $\forall s \in \mathscr{S}, a \in \mathscr{A}$, $z^\pi(a,s |\theta) = \theta_{s,a}$. 

Next, we describe a form of mirror ascent to directly update a policy's functional representation.

\subsection{Functional Mirror Ascent Update}
\label{sec:fma-update}
To state the functional mirror ascent (FMA) update, we define a strictly convex, differentiable function $\phi$ as the mirror map. We denote by $\breg{\pi}{\mu}$ the Bregman divergence associated with the mirror map $\phi$ between policies $\pi$ and $\mu$. Each iteration $t \in [T]$ of FMA consists of the update and projection steps~\citep{bubeck2015convex}: \cref{eq:fmd-op-update} computes the gradient $\nabla_{\pi} J(\pi_t)$ \emph{with respect to the policy's functional representation} and updates $\pit$ to $\pih$ using a step-size $\eta$; \cref{eq:fmd-op-proj} computes the Bregman projection of $\pih$ onto the class of realizable policies, obtaining $\pitt$.    
\begin{align}
    \pih = (\nabla\phi)^{-1}\left(\nabla\phi(\pi_t) + \eta \nabla J(\pi_t)\right), \label{eq:fmd-op-update} \\
    \pitt = \arg\min_{\pi \in \Pi} D_\phi(\pi, \pi_{t+1/2}). \label{eq:fmd-op-proj}
\end{align}
The above FMA updates can also be written as \citep[c.f.][]{bubeck2015convex}: 
\begin{align}
\pitt &= \arg\max_{\pi \in \Pi} \left[ \inner{\pi}{\nabla_\pi J(\pi_t)} - \frac{1}{\eta}D_\phi(\pi, \pi_t) \right].     
\label{eq:fmd-update}
\end{align}
Note that the FMA update is solely in the functional space, and is specified by the choice of the functional representation and mirror map. The update requires solving a sub-problem to project the updated policy onto the set $\Pi$. Since the policy parameterization defines the set $\Pi$ of realizable policies, it influences the difficulty of solving this projection sub-problem as well as the final policy $\pi_{t+1}$. For simple policy parameterizations such as tabular or when using a linear model, the set $\Pi$ is convex and the minimization in~\cref{eq:fmd-update} can be done exactly. When using more complex policy parameterizations (e.g. deep neural network), the set of realizable policies $\Pi$ can become arbitrarily complicated and non-convex, making the projection in~\cref{eq:fmd-update} infeasible. The \FMAPG/ framework overcomes this issue as follows.


\subsection{\FMAPG/ Framework}
\label{sec:fmapg-framework}
We assume that $\Pi$ consists of policies that are realizable by a model parameterized by $\theta \in \R^d$. Throughout the paper, we will use $\pi$ to refer to a policy's functional representation, whereas $\pi(\theta)$ will refer to the parametric realization of $\pi$. We do not impose any restriction on the parameterization and any generic model (e.g. neural network) can be used to parameterize $\pi$. The choice of the policy parameterization is implicit in the $\pi(\theta)$ notation. For the special case of the tabular parameterization, $\pi = \pi(\theta) = \theta$.

Solving~\cref{eq:fmd-op-proj} iteratively may be interpreted as finding a path that starts from $\pi_{t+1/2}$ and gradually gets closer to the set $\Pi$. In this view, an approximate solution would be a point along that path that is not in the set $\Pi$, and consequently not realizable by a vector $\theta$. Another perspective is to interpret solving \cref{eq:fmd-op-proj} as finding a path \emph{within $\Pi$} that starts from $\pi_t$, the previous policy (already in $\Pi$), and gets closer to $\pi_{t+1/2}$ (potentially outside $\Pi$). Any point along such a path is within $\Pi$ and is thus realizable. In other words, we replace \eqref{eq:fmd-op-proj} with another problem with the same solution:
\begin{align}
\arg\min_{\pi \in \Pi} D_\phi(\pi, \pi_{t+1/2}) &= \arg\min_{\theta \in \R^d} D_\phi(\pi(\theta), \pi_{t+1/2}) \; .\label{eq:equivalence}
\end{align}
With this reparameterization, no projection is required and the update in~\cref{eq:fmd-update} can be written as a parametric, unconstrained optimization problem. This is a critical property as it makes \FMAPG/ applicable to any policy parameterization. 

In particular, if $\pit = \pi(\theta_t)$, $\theta_{t+1} \in \R^d$ is the solution to the RHS of~\cref{eq:equivalence} and $\pitt = \pi(\theta_{t+1})$, then~\cref{eq:fmd-update} can be written as the maximization of a \emph{surrogate function},  $\theta_{t+1} = \argmax_{\theta \in \R^d} \ell_t^{\pi, \phi, \eta}(\theta)$, where  
\begin{align}
\ell_t^{\pi, \phi, \eta}(\theta) & := J(\pi(\theta_t)) + \inner{\pi(\theta) - \pi(\theta_t)}{\nabla_\pi J(\pi(\theta_t))} \nonumber \\ 
& - \frac{1}{\eta}D_\phi(\pi(\theta), \pi(\theta_t)) \,.  \label{eq:fmd-update-param}    %
\end{align}
The surrogate function $\ell_t^{\pi, \phi, \eta}(\theta)$ is a function of $\theta$, but it is specified by the choice of the functional representation, the mirror map $\Phi$, and the step-size $\eta$. Note that as compared to~\cref{eq:fmd-update}, in~\cref{eq:fmd-update-param}, we added terms independent of $\theta$ which do not change the $\argmax$ but will prove useful to prove guarantees in~\cref{sec:guarantees}. We have thus used the FMA update in~\cref{eq:fmd-update} to specify a family of surrogate functions that can be used with any policy parameterization. We refer to this general framework of constructing surrogates for policy gradient methods as \FMAPG/.  

The surrogate function in~\cref{eq:fmd-update-param} is non-concave in general and can be maximized using a gradient-based algorithm. We will use $m$ gradient steps with a step-size $\alpha$ to maximize $\ell_t^{\pi, \phi, \eta}(\theta)$. With this choice, we can now state a generic policy optimization algorithm (pseudo-code in~\cref{alg:generic}). We see that the surrogate function $\ell_t^{\pi, \phi, \eta}$ acts as a ``guide'' for the parametric updates in the inner loop, similar to the supervised learning method proposed by~\citet{johnson2020guided}.  
\begin{algorithm}[!h]
\caption{Generic policy optimization}
\label{alg:generic}
\textbf{Input}: $\pi$ (choice of functional representation), $\theta_0$ (initial policy parameterization), $T$ (PG iterations), $m$ (inner-loops), $\eta$ (step-size for functional update), $\alpha$ (step-size for parametric update) \\
\For{$t \leftarrow 0$  \KwTo  $T-1$}{
    Compute gradient $\nabla_\pi J(\pi_t)$ and form function $\ell_t^{\pi, \phi, \eta}(\theta)$ as in~\cref{eq:fmd-update-param} \\
    Initialize inner-loop: $\omega_0 = \theta_{t}$ \\
    \For{$k \leftarrow 0$   \KwTo  $m$}{
    $\omega_{k+1} = \omega_{k} + \alpha \nabla_{\omega} \ell_t^{\pi, \phi, \eta}(\omega_k)$
    }
    $\theta_{t+1} = \omega_{m}$ \\
    $\pitt = \pi(\theta_{t+1})$
}
Return $\theta_{T}$
\end{algorithm}


\vspace{-2ex}
\section{THEORETICAL GUARANTEES}
\label{sec:guarantees}
In this section, we explain the theoretical advantage of using surrogate functions instantiated by \FMAPG/. Recall that the policy is updated through the (potentially approximate) maximization of~\cref{eq:fmd-update-param}. To guarantee that maximizing the surrogate function improves the resulting policy, i.e. $J(\pitt) \geq J(\pit)$, a sufficient condition is to have $\ell_t(\theta) \leq J(\pi(\theta))$ \emph{for all} $\theta$. Indeed, if $\ell_t$ is a uniform lower-bound on $J$, then,
\begin{align*}
& J(\pi_{t+1}) = J(\pi(\theta_{t+1})) \geq \ell_t(\theta_{t+1}) \\
& \geq \ell_t(\theta_{t}) & \tag{By maximizing the surrogate function} \\
& = J(\pi(\theta_t)) = J(\pit) & \tag{From~\cref{eq:fmd-update-param}}
\end{align*}

For stating a more practical condition that guarantees that the surrogate function is a uniform lower-bound on $J$, we prove the following proposition in~\cref{app:proofs-fmapg}.
\vspace{-4ex}
\begin{restatable}[Guarantee on surrogate function]{proposition}{smooth}
\label{prop:eta_smoothness}
The surrogate function $\ell_t^{\pi, \phi, \eta}$ is a lower bound of $J$ if and only if $J + \frac{1}{\eta}\phi$ is a convex function of $\pi$.
\end{restatable}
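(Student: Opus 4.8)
The plan is to work entirely in the functional space and show that the gap $J(\pi) - \ell_t^{\pi,\phi,\eta}(\pi)$ is precisely the Bregman-type remainder generated by the function $F := J + \frac{1}{\eta}\phi$, after which the claim reduces to the standard first-order characterization of convexity. Writing the surrogate in functional form as $\ell(\pi) = J(\pit) + \inner{\pi - \pit}{\grad{\pit}} - \frac{1}{\eta}\breg{\pi}{\pit}$, the property to characterize is the lower bound $\ell(\pi) \le J(\pi)$. The parametric statement $\ell_t^{\pi,\phi,\eta}(\theta) \le J(\pi(\theta))$ follows immediately by the substitution $\pi = \pi(\theta)$, while the functional bound is exactly what is needed to read off convexity; it is therefore natural to interpret the lower-bound property over all functional $\pi$ and all base points $\pit$ (as we must, since the improvement guarantee is wanted at every iteration).

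First I would expand the Bregman divergence using $\breg{\pi}{\pit} = \phi(\pi) - \phi(\pit) - \inner{\gradphi{\pit}}{\pi - \pit}$ and compute the gap directly:
\[
J(\pi) - \ell(\pi) = \big(J(\pi) - J(\pit) - \inner{\grad{\pit}}{\pi - \pit}\big) + \frac{1}{\eta}\breg{\pi}{\pit}.
\]
Collecting the linear terms, the right-hand side is exactly $F(\pi) - F(\pit) - \inner{\nabla F(\pit)}{\pi - \pit}$ with $F = J + \frac{1}{\eta}\phi$, i.e. the remainder $D_F(\pi,\pit)$ of the first-order expansion of $F$ at $\pit$. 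Hence $\ell$ lower-bounds $J$ at base point $\pit$ if and only if $D_F(\pi,\pit) \ge 0$ for all $\pi$.

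The final step invokes the standard equivalence for differentiable functions: $F$ is convex if and only if it lies above each of its tangent hyperplanes, i.e. $F(\pi) \ge F(\pit) + \inner{\nabla F(\pit)}{\pi - \pit}$ for all $\pi, \pit$, which is precisely $D_F(\pi,\pit) \ge 0$. Quantifying over all base points $\pit$ yields both directions at once: if $F$ is convex, the surrogate is a global lower bound at every iterate; conversely, if the surrogate lower-bounds $J$ at every base point, then $F$ satisfies the first-order convexity condition everywhere and is therefore convex.

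I do not expect a serious obstacle in the algebra, which is essentially a one-line rearrangement. The one point requiring care is the interpretation of the statement: because the surrogate depends on the current iterate $\pit$, the clean ``if and only if'' holds when the lower-bound property is required for all admissible base points. I would also emphasize that nothing in this argument references the parameterization — $F$ lives on the functional space — which is exactly the content of the proposition and explains why the resulting prescription for $\eta$ (choosing $\frac{1}{\eta}$ large enough that the convexity of $\phi$ dominates any concavity deficit of $J$) is parameterization-independent.
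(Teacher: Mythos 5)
Your proposal is correct and follows essentially the same route as the paper's proof: expand the Bregman divergence, regroup the gap $J(\pi) - \ell_t^{\pi,\phi,\eta}(\pi)$ into the first-order remainder of $F = J + \frac{1}{\eta}\phi$, and invoke the tangent-hyperplane characterization of convexity quantified over all $\pi$ and $\pi_t$. Your added remark on interpreting the lower-bound property over all base points simply makes explicit what the paper's final sentence (``for all $\pi$ and all $\pi_t$'') already asserts.
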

The above proposition shows that the desired property is guaranteed by selecting an appropriate value of $\eta$ that only depends on properties of $J$ and the mirror map $\Phi$ in the functional space. Once again, we emphasize that the guarantees offered by the surrogate function are \emph{independent of the parameterization.}

We have seen that if the surrogate is a uniform lower bound on $J$, then the equality of the two functions at $\theta = \theta_t$ (from~\cref{eq:fmd-update-param}) guarantees that any improvement of the surrogate leads to an improvement of $J$. The following result states that improvement in the surrogate can be guaranteed provided that the parametric step-size $\alpha$ is chosen according to the smoothness of the surrogate function.
\begin{restatable}[Guaranteed policy improvement for~\cref{alg:generic}]{theorem}{improvementparametric}
\label{thm:improvement-parametric}
Assume that $\ell_t$ is $\beta$-smooth w.r.t. the Euclidean norm and that $\eta$ satisfies the condition of Proposition~\ref{prop:eta_smoothness}. Then, for any $\alpha \leq \nicefrac{1}{\beta}$, iteration $t$ of~\cref{alg:generic} guarantees $J(\pitt) \geq J(\pi_t)$ for any number $m$ of inner-loop updates. 
\end{restatable}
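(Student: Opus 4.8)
The plan is to combine the two ingredients that the text has already assembled just before the statement. First, because $\eta$ satisfies the condition of Proposition~\ref{prop:eta_smoothness}, the surrogate lower-bounds the objective, i.e. $\ell_t^{\pi, \phi, \eta}(\theta) \leq J(\pi(\theta))$ for all $\theta$. Second, by construction $\ell_t^{\pi, \phi, \eta}(\theta_t) = J(\pi_t)$. Given these, it suffices to prove that the inner loop of~\cref{alg:generic} returns a $\theta_{t+1}$ satisfying $\ell_t^{\pi, \phi, \eta}(\theta_{t+1}) \geq \ell_t^{\pi, \phi, \eta}(\theta_t)$; the chain $J(\pitt) \geq \ell_t^{\pi, \phi, \eta}(\theta_{t+1}) \geq \ell_t^{\pi, \phi, \eta}(\theta_t) = J(\pit)$ then closes the argument. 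Thus the entire content of the theorem reduces to showing that projected-free gradient \emph{ascent} on the smooth surrogate is monotone non-decreasing in the surrogate value.

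The heart of the proof is therefore a descent-lemma argument. Since $\ell_t^{\pi, \phi, \eta}$ is $\beta$-smooth with respect to the Euclidean norm, for any iterate $\omega_k$ and the ascent update $\omega_{k+1} = \omega_k + \alpha \nabla_\omega \ell_t^{\pi, \phi, \eta}(\omega_k)$ the quadratic bound gives $\ell_t^{\pi, \phi, \eta}(\omega_{k+1}) \geq \ell_t^{\pi, \phi, \eta}(\omega_k) + \alpha\left(1 - \nicefrac{\alpha\beta}{2}\right)\normsq{\nabla_\omega \ell_t^{\pi, \phi, \eta}(\omega_k)}$. Choosing $\alpha \leq \nicefrac{1}{\beta}$ forces the factor $1 - \nicefrac{\alpha\beta}{2} \geq \nicefrac{1}{2} > 0$, so each inner step satisfies $\ell_t^{\pi, \phi, \eta}(\omega_{k+1}) \geq \ell_t^{\pi, \phi, \eta}(\omega_k)$. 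Telescoping these inequalities over $k = 0, \ldots, m-1$, with the initialization $\omega_0 = \theta_t$ and the output $\theta_{t+1} = \omega_m$, yields $\ell_t^{\pi, \phi, \eta}(\theta_{t+1}) \geq \ell_t^{\pi, \phi, \eta}(\theta_t)$, precisely as required. Crucially, this holds for every number $m \geq 0$ of inner-loop updates, including the degenerate case $m = 0$ where $\theta_{t+1} = \theta_t$ and the inequality is an equality.

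Finally I would assemble the pieces into $J(\pitt) = J(\pi(\theta_{t+1})) \geq \ell_t^{\pi, \phi, \eta}(\theta_{t+1}) \geq \ell_t^{\pi, \phi, \eta}(\theta_t) = J(\pi_t)$, where the first inequality is the lower-bound property from Proposition~\ref{prop:eta_smoothness} and the second is the inner-loop monotonicity just established. I expect the only nontrivial step to be the clean application of the descent lemma, specifically confirming that $\beta$-smoothness in the Euclidean norm supplies the quadratic upper bound on $-\ell_t^{\pi, \phi, \eta}$ and that the ascent (rather than descent) sign conventions are tracked correctly so that the step-size threshold $\alpha \leq \nicefrac{1}{\beta}$ is the right one. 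Everything else is a direct chaining of inequalities that the surrounding text has already set up.
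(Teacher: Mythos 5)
Your proof is correct and follows essentially the same route as the paper's: the ascent form of the descent lemma applied to the $\beta$-smooth surrogate $\ell_t$, telescoped over the $m$ inner steps, then chained with $\ell_t(\theta_t) = J(\pi_t)$ and the lower-bound property $J \geq \ell_t$ supplied by Proposition~\ref{prop:eta_smoothness}. If anything, your version is marginally more complete, since the paper's proof instantiates $\alpha = \nicefrac{1}{\beta}$ exactly while you carry a general $\alpha \leq \nicefrac{1}{\beta}$ through the quadratic bound via the factor $\alpha\left(1 - \nicefrac{\alpha\beta}{2}\right) \geq \nicefrac{\alpha}{2}$.
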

Note that~\cref{alg:generic} and the corresponding theorem can be easily extended to handle  stochastic parametric updates. This will guarantee that $\E [J(\pitt)] \geq J(\pi_t)$ where the expectation is over the sampling in the parametric SGD steps. Similarly, both the algorithm and theoretical guarantee can be generalized to incorporate the relative smoothness of $\ell_t(\theta)$ w.r.t. a general Bregman divergence~\citep{lu2018relatively}. 

For rewards in $[0,1]$, $J(\pi)$ is upper-bounded by $\frac{1}{1 - \gamma}$, and hence monotonic improvements to the policy guarantee convergence to a stationary point. We emphasize that the above result holds for \emph{any arbitrarily complicated policy parameterization}. Hence, a successful PG method (one that reliably improves the policy) relies on appropriately setting two step-sizes: $\eta$ at the functional level and $\alpha$ at the parametric level.

\section{INSTANTIATING \FMAPG/}
\label{sec:instantiation}
We now instantiate the \FMAPG/ framework with two common functional representations: the direct representation (\cref{sec:direct}) and the softmax representation (\cref{sec:softmax}), deriving values for $\eta$ for each.

\subsection{Direct Functional Representation}
\label{sec:direct}
In the direct functional representation, the policy $\pi$ is represented by the set of distributions $\pidist(\cdot|s)$ over actions for each state $s \in \mathscr{S}$. Using the policy gradient theorem~\citep{sutton18book}, in this case, $\frac{\partial J(\pi)}{\partial \pidist(a|s)} = d^\pi(s) Q^\pi(s,a)$. Since $\pidist(\cdot | s)$ is a set of distributions (one for each state), we define the mirror map as $\phi(\pi) = \sum_{s \in \mathscr{S}} w(s) \, \phi(\pidist(\cdot | s))$, where $w(s)$ is a positive weighting on the states $s$. Note that the positive weights ensure that $\phi$ is a valid mirror-map. The resulting Bregman divergence is $D_\phi(\pi, \pi') = \sum_s w(s) D_\phi(\pidist(\cdot | s), \pipdist(\cdot | s))$, that is, the weighted sum of the Bregman divergences between the action distributions in state $s$. By choosing $w(s)$ equal to $d^{\pi_t}(s)$, and parameterizing the functional representation, i.e. $\pitdist(\cdot | s) = \pidist(\cdot | s, \theta_t)$, we obtain the following form of the surrogate function:
\begin{align}
\ell_t^{\pi, \phi, \eta}(\theta) &= \E_{(s,a) \sim \mu^{\pit}} \left[
\left(Q^{\pi_t}(s, a) \, \frac{\pidist(a | s, \theta)}{\pidist(a | s, \theta_t)} \right) \right] \nonumber \\ 
& - \frac{1}{\eta} \E_{s \sim d^{\pit}} \left[ D_\phi(\pidist(\cdot | s, \theta), \pidist(\cdot | s, \theta_t)) \right] ,
\label{eq:fmd-direct-statewise-practical}
\end{align}
where the constants independent of $\theta$ were omitted. By choosing $\phi$ and $\eta$, the above surrogate function can be used with~\cref{alg:generic}. We now discuss how to set $\eta$ that guarantees monotonic policy improvement when using the above surrogate function with the negative entropy mirror map, i.e. $\phi_{NE}(\pidist(\cdot | s)) = -\sum_a \pidist(a | s) \log \pidist(a | s)$.  
\begin{restatable}[Improvement guarantees for direct functional representation]{proposition}{improvdirect}
\label{prop:eta_direct}
Assuming that the rewards are in $[0,1]$, when using the surrogate function in~\cref{eq:fmd-direct-statewise-practical} with the mirror map chosen to be the negative entropy, then $J \geq \ell_t^{\pi, \phi, \eta}$ for $\eta \leq \frac{(1 - \gamma)^3}{2\gamma |A|}$.
\end{restatable}
This proposition is proved in~\cref{app:proofs-instantiation}. Using the argument in~\cref{sec:guarantees}, we can infer that using the direct functional representation with the negative entropy mirror map and $\eta \leq \frac{(1 - \gamma)^3}{2\gamma |A|}$ ensures monotonic policy improvement for \emph{any policy parameterization}. 

Next, we discuss how the surrogate function in~\cref{eq:fmd-direct-statewise-practical} and the resulting algorithm is related to existing methods. When using a tabular parameterization, i.e. when $\pi(\theta) = \theta$, we make the following connections:

\textbf{Connection to uniform TRPO and MDPI}: With the tabular parameterization, the proposed update is similar to the update in uniform TRPO~\citep{shani2020adaptive} and Mirror Descent Modified Policy Iteration~\citep{geist2019theory}. 

\textbf{Connection to CPI}: For finite states and actions, when using a tabular parameterization, the first term in~\cref{eq:fmd-direct-statewise-practical} becomes the same as in conservative policy iteration (CPI)~\citep{kakade2002approximately}. In CPI, the authors first derive the form $\sum_s d^{\pi}(s) \sum_a \pidist(a|s) Q^{\pi_t}(s,a)$, then use a mixture policy to ensure that $\pi$ is ``close'' to $\pi_t$ and justify replacing $d^{\pi}(s)$ in the above expression by $d^{\pi_t}$. On the other hand, we use the \FMAPG/ framework to directly derive~\cref{eq:fmd-direct-statewise-practical} and allow for the use of any Bregman divergence to ensure the proximity between $\pi$ and $\pi_t$. While we derive the CPI update from an unconstrained optimization viewpoint, CPI has also been connected to constrained optimization with an equivalence to functional Frank-Wolfe~\citep{scherrer2014local}.

\textbf{Connection to REINFORCE-based methods}: For finite states and actions, when using a tabular parameterization and \cref{alg:generic} with $m = \infty$ (exact minimization of the surrogate), if we choose the (i) squared Euclidean distance as the mirror map, the proposed update is the same as standard REINFORCE~\citep{williams1991function,agarwal2019optimality} and (ii) negative entropy as the mirror map (implying that the resulting Bregman divergence is the KL divergence), the proposed update is equal to natural policy gradient ~\citep{kakade2001natural}. 

\textbf{Comparison to MDPO}: With a direct functional representation, negative entropy mirror map and a general policy parameterization, the resulting \FMAPG/ update is similar to MDPO~\citep{tomar2020mirror}. The only difference between the two updates is that MDPO involves the advantage $A^{\pit}$ instead of the $Q^{\pit}$ term in~\cref{eq:fmd-direct-statewise-practical}. Since both $A^{\pit}$ and $Q^{\pit}$ are independent of $p^\pi$, this difference does not matter for gradient-based algorithms maximizing the surrogate (see caption of~\cref{table:ablation_study} in  \cref{sec:tabular_algorithmic_details}). Hence, MDPO directly falls under the \FMAPG/ framework. 

The above formulation has two main shortcomings. First, it involves $\pidist(a | s, \theta)$, which means that for each parametric update, either (i) the actions need to be resampled on-policy, or (ii) the update involves an importance-sampling ratio $\nicefrac{\pidist(a | s, \theta)}{\pidist(a | s, \theta_t)}$ like in~\cref{eq:fmd-direct-statewise-practical}. This requires clipping the ratio for  stability, and can potentially result in overly conservative updates~\citep{schulman2017proximal}. Moreover, with the mirror map as the negative entropy, the Bregman divergence is the \emph{reverse} KL divergence, i.e. $D_\phi(\pidist(\cdot | s, \theta), \pidist(\cdot | s, \theta_t)) = \text{KL}(\pidist(\cdot | s, \theta) || \pidist(\cdot | s, \theta_t))$. The reverse KL divergence makes this objective \emph{mode seeking}, in that the policy $\pi$ might only capture a subset of the actions covered by $\pi_t$. Past works have addressed this issue either by adding entropy regularization~\citep{geist2019theory, shani2020adaptive}, or by simply reversing the KL, using the \emph{forward KL}: $\text{KL}(\pidist(\cdot | s, \theta_t) || \pidist(\cdot | s, \theta))$~\citep{mei2019principled}. However, using entropy regularization results in a biased policy, whereas the forward KL does not correspond to a valid Bregman divergence in $p^\pi$ and can converge to a sub-optimal policy. We now show how \FMAPG/ with the softmax representation addresses both these issues in a principled way, providing a theoretical justification to heuristics that are used to improve PG methods.

\subsection{Softmax Functional Representation}
\label{sec:softmax}
Since $\pidist(\cdot | s)$ is a distribution, it has an equivalent softmax representation that we study in this section. The softmax functional representation results in the FMA update on the logits $z^\pi(a, s)$ of the conditional distributions $\pidist(a|s)$. Formally, $\pidist(a | s)  = \frac{\exp(z^\pi(a, s))}{\sum_{a'} \exp(z^\pi(a', s))}$ and the policy gradient theorem yields $\frac{\partial J(\pi)}{\partial z^\pi(a, s)} = d^{\pi}(s) A^\pi(s, a) \pidist(a | s)$. Here, $A^\pi(s,a)$ is the advantage function equal to $Q^\pi(s,a) - V^\pi(s)$. Similar to~\cref{sec:direct}, we use a mirror map $\phi_{z}(z)$ that decomposes across states, i.e. $\phi_z(z) = \sum_s w(s) \, \phi_z(z^\pi(\cdot, s))$ for some positive weighting $w$. We denote the corresponding Bregman divergence as $D_{\phi_z}$ and choose $w(s) = d^{\pi_t}(s)$.
Parameterizing the logits as $z^\pi(a, s, \theta)$ and noting that $\pitdist(a|s) = \pidist(a|s, \theta_t)$, we obtain the following form of the surrogate function:
\begin{align}
& \ell_t^{\pi, \phi, \eta}(\theta) = E_{(s, a) \sim \mu^{\pi_t}} \left[ A^{\pi_t}(s, a) \, z^\pi(a, s, \theta) \right] \nonumber \\ 
& \qquad - \frac{1}{\eta}\sum_s w(s) \, D_{\phi_z}\left(z^\pi(\cdot, s, \theta), z^\pi(\cdot, s, \theta_t) \right) \;. \label{eq:fmd-softmax-statewise}
\end{align}

We now discuss how the surrogate function in~\cref{eq:fmd-softmax-statewise} and the resulting algorithm relate to existing methods.

\textbf{Connection to REINFORCE-based methods}: For finite states and actions and when using a tabular parameterization and the squared Euclidean mirror map, \cref{alg:generic} with $m = 1$ leads to the same update as that of policy gradient with the softmax parameterization~\citep{agarwal2019optimality, mei2020global}. 

A more interesting surrogate emerges when $\phi$ is the \emph{logsumexp}, i.e. $\phi_z(z) = \sum_s w(s) \log \left(\sum_a \exp(z^\pi(a, s))\right)$, and $w(s) = d^{\pi_t}(s)$. Then, 
\begin{align}
\ell_t^{\pi, \phi, \eta}(\theta) = E_{(s, a) \sim \mu^{\pi_t}} \left[\left(A^{\pi_t}(s, a) + \frac{1}{\eta}\right)\log \frac{\pidist(a | s, \theta)}{\pidist(a | s, \theta_t)}\right], 
\label{eq:fmd-softmax-kl-practical}
\end{align}
omitting the constant terms independent of $\theta$. The full derivation of this computation can be found in~\cref{prop:fmd-softmax-kl} of \cref{app:proofs-instantiation}. We now discuss how to set $\eta$ that guarantees monotonic policy improvement when using the above surrogate function.
\begin{restatable}[Improvement guarantees for softmax functional representation]{proposition}{improvsoftmax}
\label{prop:eta_softmax}
Assuming that the rewards are in $[0,1]$, then the surrogate function in~\cref{eq:fmd-softmax-kl-practical} satisfies $J \geq \ell_t^{\pi, \phi, \eta}$ for $\eta \leq 1 - \gamma$.
\end{restatable}
This proposition is proved in~\cref{app:proofs-instantiation}. As before, we can infer that using the softmax functional representation with the logsumexp mirror map and $\eta \leq 1 - \gamma$ ensures monotonic policy improvement for \emph{any policy parameterization}. Although we have used the same $\eta$ for all states $s$, the updates in~\cref{eq:fmd-direct-statewise-practical,eq:fmd-softmax-kl-practical} can accommodate a different step-size $\eta(s)$ for each state. This is likely to yield tighter lower bounds and larger improvements in the inner loop. Determining such step-sizes is left for future work.

Unlike the formulation in~\cref{eq:fmd-direct-statewise-practical}, we see that~\cref{eq:fmd-softmax-kl-practical} relies on the logarithm of the importance sampling ratios. Moreover,~\cref{eq:fmd-softmax-kl-practical} can be written as
\begin{align}
\ell_t^{\pi, \phi, \eta} &= \E_{s \sim d^{\pit}} \bigg[ \E_{a \sim \pitdist} \left(A^{\pi_t}(s, a) \log \frac{\pidist(a | s, \theta)}{\pidist(a | s, \theta_t)}  \right) \nonumber \\ 
& - \frac{1}{\eta} \text{KL} (\pidist(\cdot | s, \theta_t) || \pidist(\cdot | s, \theta)) \bigg].
\label{eq:fmd-softmax-kl-practical-2}
\end{align}
Comparing to~\cref{eq:fmd-direct-statewise-practical}, we observe that the KL divergence is in the \emph{forward} direction and is \emph{mode covering}. This naturally prevents a mode-collapse of the policy and encourages exploration. We thus see that \FMAPG/ is able to recover an implementation heuristic (forward vs reverse KL) in a principled manner. Moreover, we can interpret~\cref{eq:fmd-softmax-kl-practical-2} as a variant of TRPO with desirable properties, as we discuss next. 

\textbf{Comparison to TRPO}: Comparing~\cref{eq:fmd-softmax-kl-practical-2} to the TRPO update~\citep{schulman2015trust}, $\argmax_{\theta \in \R^d} \E_{(s, a) \sim \mu^{\pi_t}} [A^{\pi_t}(s, a) \, \frac{\pidist(a|s, \theta)}{\pidist(a|s, \theta_t)} ]$, such that $\E_{s \sim d^{\pi_t}} \left[\text{KL}(\pitdist(\cdot | s, \theta_t) || \pidist(\cdot | s, \theta))  \right] \leq \delta$, we observe that~\cref{eq:fmd-softmax-kl-practical-2} involves the logarithm of $p^\pi$, which can be interpreted as a form of soft clipping due to the narrower range of the log ratio. Additionally, when the policy is modeled by a deep network with a final softmax layer, this leads to an objective concave in the last layer, which is in general easier to optimize than the original TRPO objective. Unlike TRPO, the proposed update enforces the proximity between policies via a regularization rather than a constraint. This modification has been recently found to be beneficial~\citep{lazic2021optimization}. Finally, the parameter $\delta$ in TRPO is a hyper-parameter that needs to be tuned. In contrast, the regularization strength $\nicefrac{1}{\eta}$ in proposed update can be determined theoretically (\cref{prop:eta_softmax}). 
 
\section{EXPERIMENTAL EVALUATION}
\label{sec:experiments}
\begin{figure*}[t]
    \centering
    \includegraphics[width=\textwidth]{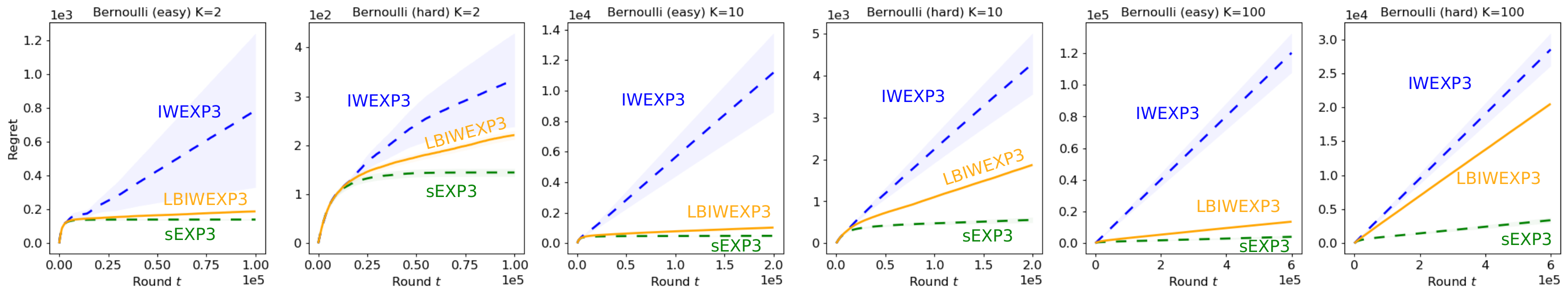}
    \caption{\looseness=-1
    Comparing the average regret over 50 runs for two variants of EXP3 -- with standard importance~weights (IWEXP3) or loss-based importance weights (LBIWEXP3) to that of sEXP3. Both algorithms use a tuned step-size equal to $0.005$. We observe that sEXP3 consistently achieves lower regret. }
    \label{fig:sppo_bandits}
\end{figure*}
While this work focuses on providing a general framework for designing surrogate functions, we explore the behaviour of surrogates instantiated by the softmax functional representation in three different settings. \textbf{First}, to avoid dealing with local maxima of $J$, we explore a multi-armed bandit, where we compare it to the exponential weights algorithm (EXP3)~\citep{auer2002nonstochastic} in~\cref{sec:mab}. The simplicity of the environment  allows us to get a clearer understanding of the behaviour of each algorithm. \textbf{Second}, we set up small-scale RL environments where the surrogates in~\cref{sec:instantiation} can be maximized exactly. We assume access to the exact MDP dynamics and rewards model to focus on the impact of the proposed surrogate, ignoring potential interactions with a critic and avoiding exploration and sampling issues. \textbf{Finally}, we tested the practical performance of \FMAPG/ using a larger-scale experiment on MuJoCo in~\cref{sec:mujoco}. In addition to the increased complexity of the environments, this experiment allows us to explore how the surrogate behaves in the presence of a critic. Since the policies are parameterized as a deep network, the surrogate can only be maximized approximately.  
\vspace{-2ex}
\subsection{Multi-armed Bandit}
\label{sec:mab}
For a stochastic multi-armed bandit problem, we compare EXP3, which corresponds to the single-state, tabular parameterization of \FMAPG/ with the direct representation and the negative entropy mirror map; to softmax EXP3 (\emph{sEXP3}), which uses the softmax parameterization and the logsumexp mirror map. For EXP3, we use the standard importance weighting procedure (denoted as IWEXP3 in the plots) as well as the loss-based variation~\citep{lattimore2020bandit} (denoted as LBIWEXP3). We choose the step-size $\eta$ that achieved the best average final regret for each algorithm over 50 runs (see~\cref{app:bandits} for details). ~\cref{fig:sppo_bandits} shows that sEXP3 consistently achieves lower regret than the EXP3 variants, regardless of the number of arms (2, 10, 100) and the problem difficulty determined by the action gap.

\subsection{Tabular MDP}
\label{sec:mdp}
We use two tabular environments: CliffWorld~\citep{sutton18book} and DeepSeaTreasure~\citep{osband2019behaviour}, and a tabular softmax policy parameterization (one parameter for each state and action). We study the performance of four algorithms, two of which are instantiated by the \FMAPG/ framework -- (i) sMDPO (maximizing the objective given in Eq. \eqref{eq:fmd-softmax-kl-practical}), (ii) MDPO (objective given in \eqref{eq:fmd-softmax-statewise} with a negative entropy mirror map), and two commonly used PG methods -- (iii) PPO~\citep{schulman2017proximal} and (iv) TRPO~\citep{schulman2015trust}. Of these, sMDPO and MDPO have two hyper-parameters: $\eta$ (outer loop stepsize) and $\alpha$ (the inner loop step-size); PPO has two hyper-parameters, $\epsilon$ (clipping factor) and $\alpha$ (the inner-loop step-size) whereas TRPO has a single hyper-parameter $\delta$, the magnitude of the KL-constraint.
For all the algorithms, we use the true action-value functions. The complete experimental setup, implementation details, and additional experiments are in~\cref{app:tabular_experiments} and \ref{app:tabular_derivations}.

\begin{figure}[hbp]
    \centering
    \includegraphics[scale=0.47]{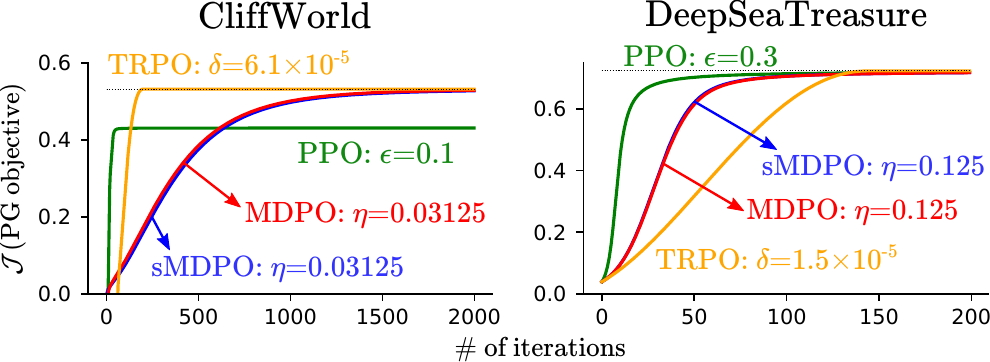}
    \caption{Comparing PG algorithms on CliffWorld and DeepSeaTreasure environments for $100$ inner-loop updates and best set of hyper-parameters. 
    \label{fig:learning_curves}}
\end{figure}
\begin{figure*}[!ht]
    \centering
    \includegraphics[width=0.95\textwidth]{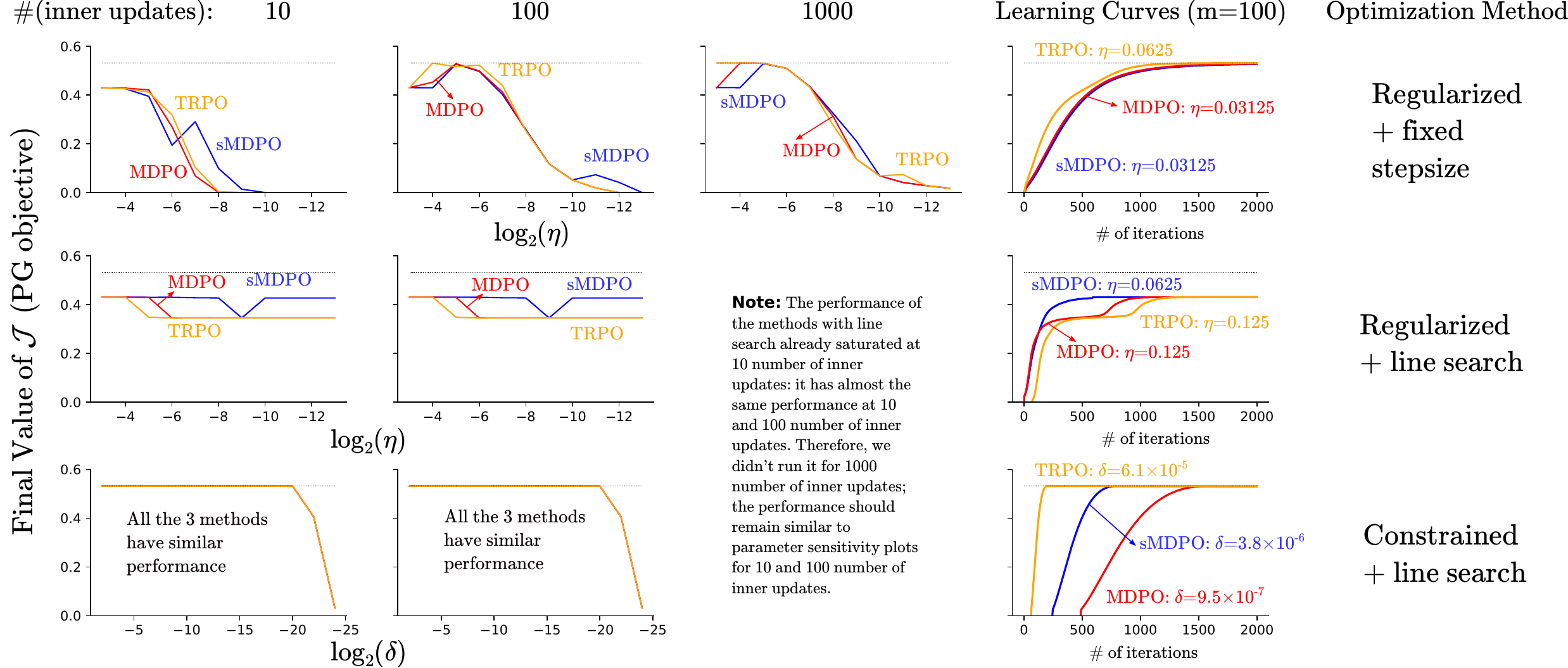}
    \caption{Parameter sensitivity for sMDPO, MDPO and TRPO on  CliffWorld for $2000$ environment interactions with different number of inner loop updates (first 3 columns) and different algorithmic choices (rows) (see~\cref{app:tabular_experiments} for exact expressions). For each plot, the X-axis shows the sensitivity towards the corresponding hyper-parameter (the other hyper-parameters are set to best-tuned values). The first row shows the regularized (with parameter $\eta$) variants (default variant of sMDPO and MDPO used in~\cref{fig:learning_curves}). The second row also shows the regularized variants but uses a line-search to set the step-size for each inner-loop. Instead of enforcing the proximity between consecutive policies via regularization, the variants in the third row use a constraint with parameter $\delta$ (default variant of TRPO used in~\cref{fig:learning_curves}). For each row, the fourth column shows the algorithm performance vs the number of environment interactions. Black lines correspond to the value of the optimal policy.  
    \label{fig:cliffworld_sensitivity_plots}} 
\end{figure*}
\begin{figure*}[!ht]
    \centering
    \includegraphics[width=0.95\textwidth]{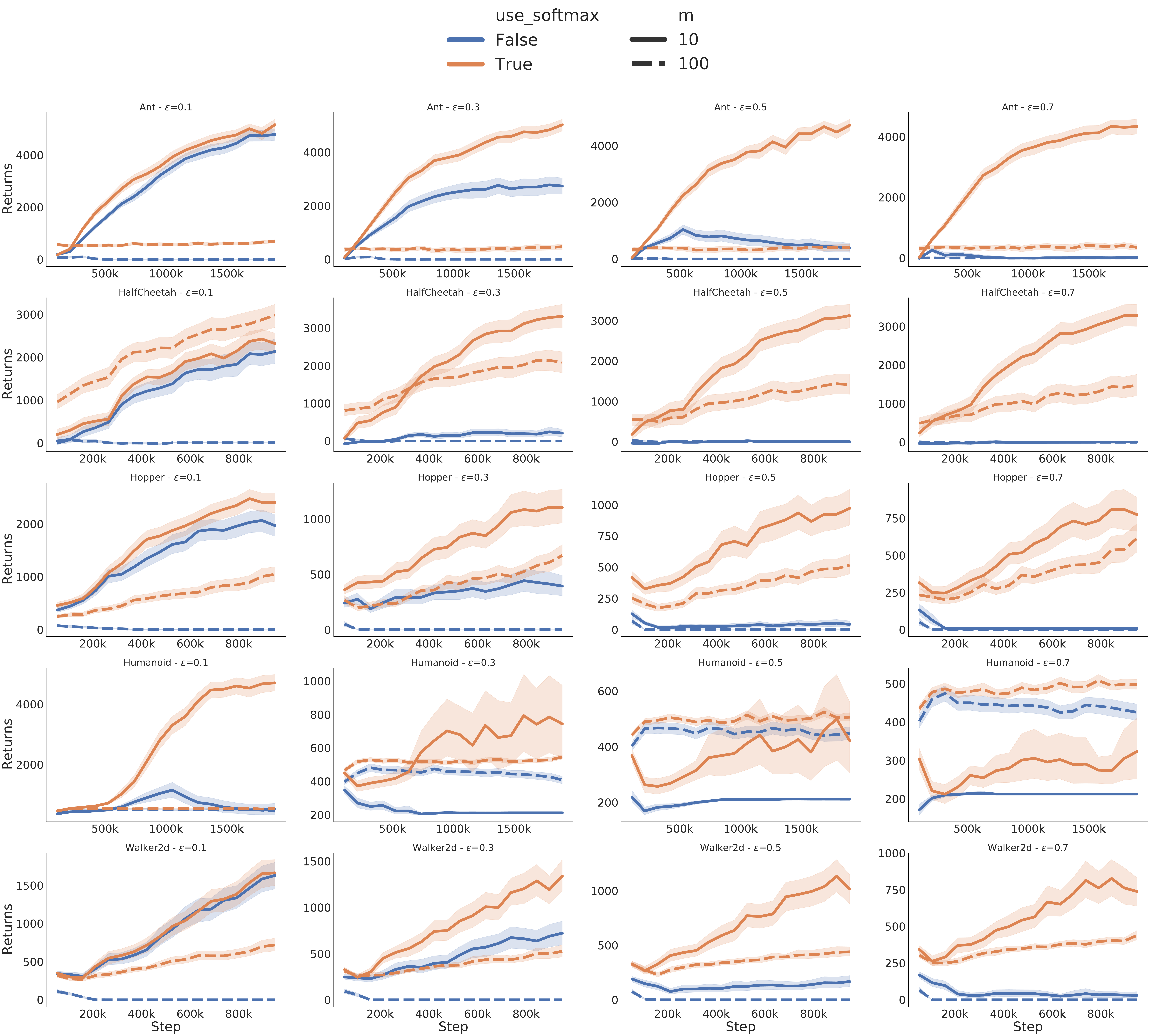}
    \caption{Average return and 95\% confidence intervals (over 180 runs) for {\color{blue}PPO} and {\color{orange}sPPO} on 5 environments rows) and for four different clipping values (columns). sPPO is more robust to large values of clipping, even more so when the number of updates in the inner loop grows (linestyle).
    \label{fig:sppo_mujoco_clean}}
\end{figure*}
\cref{fig:learning_curves} shows the algorithm performance with the number of outer-loops (interactions with the environment) for $m=100$ inner-loop updates. We show the performance for the best set of hyper-parameters for each algorithm and environment. We observe that (i) with exact computation of action-value functions, MDPO and sMDPO have similar performance, and (ii) for both environments, sMDPO, MDPO and TRPO are able to reach the performance of the optimal policy, whereas PPO (with the best hyper-parameter) converges to a sub-optimal policy for CliffWorld. For both sMDPO and MDPO, the theoretically derived step-sizes in~\cref{prop:eta_direct} and~\cref{prop:eta_softmax} are much smaller than the best tuned step-sizes (see~\cref{app:tabular_discussion} for exact calculations). Using theoretically derived step-sizes result in slow (but monotonic) convergence, verifying~\cref{thm:improvement-parametric}. Our results show that sMDPO and MDPO are competitive with popular PG algorithms, and demonstrate the effectiveness of \FMAPG/ in designing theoretically sound and practical PG methods.   

\textbf{Ablation Study:} In~\cref{fig:cliffworld_sensitivity_plots}, we study the effect of different algorithmic choices and sensitivity towards the corresponding hyper-parameter (see the caption for details) for sMDPO, MDPO, and TRPO for CliffWorld (DeepSeaTreasure results in~\cref{app:tabular_experiments}). We observe that (i) increasing the number of inner-loops (marginally) improves the performance of each method, demonstrating the effect of data reuse (ii) in the first row, all methods perform worse as the regularization increases from left to right, and the regularized variant of TRPO~\citep{lazic2021optimization} has similar performance as sMDPO and MDPO, (iii) in the second row, using a line-search for the inner-loop makes all methods more robust to $\eta$, but the aggressive (using large step-sizes) inner-loop updates can result in convergence to a sub-optimal policy, (iv) in the third row, the constrained variants of all methods are quite robust to the constraint hyper-parameter $\delta$, with all methods converging to the optimal policy. Hence, for each method, using a constraint to enforce proximity between consecutive policies can result in superior performance over its regularized counterpart (with or without line-search).

\vspace{-2ex}
\subsection{Large-scale Continuous Control Tasks}
\label{sec:mujoco}
Since PPO~\citep{schulman2017proximal} requires clipping the importance sampling ratio, in order to make the resulting algorithm similar to PPO for ease of implementation, we included clipping with the surrogate function instantiated by \FMAPG/. In particular, we modify~\cref{eq:fmd-softmax-kl-practical} and the resulting surrogate given by: 
\begin{align*}
\ell_t^{\pi, \phi, \eta}(\theta) & = \E_{(s,a) \sim \mu^{\pit}} \bigg[ A^{\pi_t}(s, a) \\ & \times \log\left(\text{clip}\left( \frac{\pidist(a | s, \theta)}{\pidist(a | s, \theta_t)}, \frac{1}{1 + \epsilon}, 1 + \epsilon \right)\right)\bigg], 
\end{align*}
We denote the above surrogate function and the resulting algorithm as \emph{sPPO}. We investigate its performance on five continuous control environments from the OpenAI Gym \citep{gym}: Hopper-v1, Walker2d-v1, HalfCheetah-v1, Ant-v1, and Humanoid-v1. As a baseline, we use the PPO implementation from \citet{andrychowicz2021matters} with their standard configuration and default hyperparameters values. We implement sPPO by adding a binary flag (\texttt{use\_softmax}). We re-emphasize that both algorithms use a critic and that the hyper-parameters of the critic are tuned using PPO to avoid favoring our framework.

We investigate the differences between PPO and sPPO by training 180 different policies for each environment and all combinations of $\texttt{use\_softmax}\in\{\texttt{True}, \texttt{False}\}$, $m\in\{10, 100\}$ and the importance weight capping value $\epsilon \in \{0.1, 0.3, 0.5, 0.7\}$ (a total compute of 1400 days with TPUv2). We evaluate each policy $18$ times during training, using the action with largest probability rather than a sample. We compute the average return and 95\% confidence intervals for each of the settings. The results are presented in~\cref{fig:sppo_mujoco_clean}, where we see that sPPO outperforms PPO across all environments. Furthermore, we see that the difference is more pronounced when the number of iterations $m$ in the inner loop is increased (linestyles) or when less capping is used (columns). 
In~\cref{app:mujoco}, we show additional results but with learning rate decay and gradient clipping disabled, two commonly used techniques to stabilize PPO training~\citep{engstrom2019implementation}. In this setting, sPPO only suffers a mild degradation while PPO fails completely,  confirming sPPO's additional robustness.


\vspace{-2ex}
\section{CONCLUSION}
\label{sec:conclusion}
\vspace{-2ex}
We proposed \FMAPG/, a general framework to design computationally efficient policy gradient methods. By disentangling the functional representation of a policy from its parameterization, we unified different PG perspectives, recovering several existing algorithms and implementation heuristics in a principled manner. By using the appropriate theoretically-determined hyper-parameters, \FMAPG/ guarantees policy improvement (and hence convergence to a stationary point) for the resulting PG method, even with arbitrarily complex policy parameterizations and for arbitrary number of inner-loop steps. We demonstrated that \FMAPG/ enables the design of new, improved surrogate functions that can lead to improved empirical results. We believe that our framework will further enable the systematic design of sample-efficient PG methods. 

Our theoretical results assume the exact computation of the action-value and advantage functions, and are thus limited in practice. In the future, we aim to handle sampling errors and extend these results to the actor-critic framework.


\section{Acknowledgements}
We would like to thank Veronica Chelu for suggesting the use of the log-sum-exp mirror map in Section 5. Nicolas Le Roux and Marlos C. Machado are funded by a CIFAR chair. Sharan Vaswani and Shivam Garg gratefully acknowledge support from Csaba Szepesv\'{a}ri during the duration of this project.
\bibliographystyle{apalike}
\bibliography{ref}

\clearpage
\appendix
\thispagestyle{empty}

\onecolumn \makesupplementtitle

\section*{Organization of the Appendix}
\begin{itemize}
   
    \item[\ref{app:svg}] \nameref{app:svg}
     
    \item[\ref{app:proofs-fmapg}] \nameref{app:proofs-fmapg}
    
    \item[\ref{app:proofs-instantiation}] \nameref{app:proofs-instantiation}
    
    \item[\ref{app:bandits}] \nameref{app:bandits}
    
    \item[\ref{app:tabular_experiments}] \nameref{app:tabular_experiments}
    
    \item[\ref{app:tabular_derivations}] \nameref{app:tabular_derivations}
    
    \item[\ref{app:mujoco}] \nameref{app:mujoco}

\end{itemize}

\section{Handling stochastic value gradients}
\label{app:svg}
Thus far we have worked with the original formulation of policy gradients where a policy is a distribution over actions given states. An alternative approach is that taken by stochastic value gradients~\citep{heess2015learning}, that rely on the reparametrization trick. In this case, a policy is not represented by a distribution over actions but rather by a set of actions. Formally, if $\varepsilon$ are random variables drawn from a fixed distribution $\nu$, then policy $\pi$ is a deterministic map from $\mathscr{S} \times \nu \rightarrow \mathscr{A}$. This corresponds to the functional representation of the policy. The action $a$ chosen by $\pi$ in state $s$ (when fixing the random variable $\epsilon = \varepsilon$) is represented as $\pi(s, \epsilon)$ and
\begin{align}
    J(\pi) &= \sum_s d^\pi(s) \int_\varepsilon \nu(\varepsilon) \, r(s, \pi(s, \varepsilon)) \, d\varepsilon \label{eq:svg-obj}
\end{align}
and~\citet{silver2014deterministic} showed that $\displaystyle \frac{\partial J(\pi)}{\partial \pi(s,\epsilon)} = d^\pi(s) \nabla_a Q^\pi(s, a)\big|_{a = \pi(s, \epsilon)}$.

If the policy $\pi$ is parameterized by model $f$ with parameters $\theta$, then $\pi(s, \epsilon) = f(\theta, s, \epsilon)$. If $f(\theta_t, \epsilon)$ and $f(\theta, \epsilon)$ are $S$-dimensional vectors, then~\cref{eq:fmd-update} is given as
\begin{align}
\theta_{t+1} &= \arg\min \sE_{\epsilon \sim \nu} \left[-\sum_s d^{\pi_t}(s) f(\theta, s, \epsilon) \nabla_a Q^{\pi_t}(s, a)\big|_{a = f(\theta_t, s,\epsilon)} + \frac{1}{\eta}D_\phi(f(\theta, \epsilon), f(\theta_t, \epsilon))\right] \; .   
\label{eq:fmd-svg}
\end{align}
Similar to~\cref{sec:direct,sec:softmax}, we will use a mirror map that decomposes across states. Specifically, we choose $\breg{\pi}{\mu} = \sum_{s \in \mathscr{S}} d^\pit(s) \, \norm{\pi(s) - \mu(s)}^{2}$. With this choice,~\cref{eq:fmd-svg} can be written as:
\begin{align}
\theta_{t+1} &= \argmax \left[ \sE_{s \sim d^\pit} \left[\sE_{\epsilon \sim \nu} \left[f(\theta, s, \epsilon) \nabla_a Q^{\pi_t}(s, a)\big|_{a = f(\theta_t, s,\epsilon)} - \frac{1}{\eta} \, \norm{f(\theta, \epsilon) - f(\theta_t, \epsilon)}^{2} \right] \right] \right]   
\label{eq:fmd-svg-stagewise}    
\end{align}
This formulation is similar to Eq~(15) of~\citep{silver2014deterministic}, with $Q^{\pi_t}$ instead of $Q^\pi$. Additionally, while the authors justified the off-policy approach with an approximation, our formulation offers guarantees provided $\eta$ satisfies the condition of Proposition~\cref{prop:eta_smoothness}. 

\section{Proofs for~\texorpdfstring{\cref{sec:guarantees}}{}}
\label{app:proofs-fmapg}

\myquote{\begin{thmbox}
\smooth*
\end{thmbox}
}
\begin{proof}
\begin{align*}
J(\pi) - \ell_t^{\pi, \phi, \eta}(\pi) &= J(\pi) - J(\pit) - \inner{\pi-\pi_t}{\nabla_\pi J(\pit)} + \frac{1}{\eta}D_\phi(\pi, \pit)\\
    &= J(\pi) - J(\pit) - \inner{\pi-\pi_t}{\nabla_\pi J(\pit)} + \frac{1}{\eta}\left(\phi(\pi) -\phi(\pit) - \inner{\nabla_\pi\phi(\pit)}{\pi - \pit}\right)\\
    &= \left(J + \frac{1}{\eta}\phi\right)(\pi) - \left(J + \frac{1}{\eta}\phi\right)(\pit) - \inner{\pi - \pi_t}{\nabla_\pi \left(J + \frac{1}{\eta}\phi\right)(\pit)} \; .
\end{align*}
The last equation is positive for all $\pi$ and all $\pit$ if and only if $J + \frac{1}{\eta}\phi$ is convex.
\end{proof}

\myquote{\begin{thmbox}
\improvementparametric*
\end{thmbox}
}
\begin{proof}
Using the update in~\cref{alg:generic} with $\alpha = \frac{1}{\beta}$ and the $\beta$-smoothness of $\ell_t(\omega)$,  for all $k \in [m-1]$, 
\begin{align*}
\ell_t(\omega_{k+1}) & \geq \ell_{t}(\omega_{k}) + \frac{1}{2 \beta} \normsq{\nabla \ell_{t}(\omega_k)} \\
\intertext{After $m$ steps,}
\ell_{t}(\omega_{m}) & \geq \ell_{t}(\omega_{0}) + \frac{1}{2 \beta} \sum_{k = 0}^{m-1} \normsq{\nabla \ell_{t}(\omega_k)} \\
\intertext{Since $\theta_{t+1} = \omega_{m}$ and $\omega_{0} = \theta_{t}$ in~\cref{alg:generic},}
\implies \ell_{t}(\theta_{t+1}) & \geq \ell_{t}(\theta_t) + \frac{1}{2 \beta} \normsq{\nabla \ell_{t}(\theta_t)} + \sum_{k = 1}^{m-1} \normsq{\nabla \ell_{t}(\omega_k)} \\
\end{align*}
Note that $J(\pit) = \ell_t(\theta_t)$ and if $\eta$ satisfies~\cref{prop:eta_smoothness}, then $J(\pitt) \geq \ell_t(\theta_{t+1})$. Using these relations, 
\begin{align*}
J(\pitt) \geq J(\pit) + \underbrace{\frac{1}{2 \beta} \normsq{\nabla \ell_{t}(\theta_t)} + \sum_{k = 1}^{m-1} \normsq{\nabla \ell_{t}(\omega_k)}}_{\text{+ve}} \implies J(\pitt) \geq J(\pit).
\end{align*}
\end{proof}
\section{Proofs for~\texorpdfstring{\cref{sec:instantiation}}{}}
\label{app:proofs-instantiation}
In this section, we first prove the equivalence of the formulations in terms of the logits and in terms of $\log \pi$.
\begin{lemma}
\label{lemma:divergence_equivalence}
Let
\begin{align}
    \phi(z) &= \log \left(\sum_a \exp(z(a))\right)\\
    p^\pi(a) &= \frac{\exp(z(a))}{\sum_{a'} \exp(z(a'))} \; .
\end{align}
Then
\begin{align}
    D_\phi(z, z') &= KL(p^{\pi'} || p^\pi) \; .
\end{align}
where $p^\pi$ and $p^{\pi'}$ use $z$ and $z'$ respectively.
\end{lemma}
\begin{proof}
\begin{align*}
    D_\phi(z, z') &= \log \left(\sum_a \exp(z(a))\right) - \log \left(\sum_a \exp(z'(a))\right) - \frac{\sum_a \exp(z'(a)) (z(a) - z'(a))}{\sum_a \exp(z'(a))}\\
    &= \sum_a p^{\pi'}(a) \left(z(a) - z'(a) + \log \left(\sum_a \exp(z(a))\right) - \log \left(\sum_a \exp(z'(a))\right)\right)\\
    &= \sum_a p^{\pi'}(a) \log \frac{p^\pi(a)}{p^{\pi'}(a)} \; .
\end{align*}
\end{proof}

\begin{thmbox}
\begin{proposition}
\label{prop:fmd-softmax-kl}
\begin{align}
\ell_t^{z^\pi, \phi, \eta}(\theta) &= J(\pit) + E_{(s, a) \sim \mu^{\pi_t}} \left(A^{\pi_t}(s, a) + \frac{1}{\eta}\right)\log \frac{p^\pi(a | s, \theta)}{p^\pit(a | s, \theta)}
\end{align}
\end{proposition}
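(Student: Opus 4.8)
The plan is to start from the definition of the surrogate in~\cref{eq:fmd-update-param}, specialized to the softmax functional representation, where the functional variable is the vector of logits $z^\pi(\cdot, s)$ and the mirror map is the per-state exponential map of~\cref{lemma:divergence_equivalence} weighted by $w(s) = d^{\pit}(s)$. Writing $\ell_t^{z^\pi, \phi, \eta}(\theta) = J(\pit) + \inner{z^\pi - z^\pit}{\nabla_z J(\pit)} - \frac{1}{\eta}\sum_s d^{\pit}(s)\, D_\phi(z^\pi(\cdot, s), z^\pit(\cdot, s))$, I would handle the linearization term and the divergence term separately and then recombine them into the $\mu^{\pit}$-expectation.

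For the linearization term, I would substitute the policy-gradient expression for the logits, $\frac{\partial J(\pi)}{\partial z^\pi(a,s)} = d^{\pit}(s)\, A^{\pit}(s,a)\, \pitdist(a|s)$, giving $\sum_{s,a} d^{\pit}(s)\, A^{\pit}(s,a)\, \pitdist(a|s)\,\bigl(z^\pi(a,s) - z^\pit(a,s)\bigr)$. The key step is to replace the logit difference by the log-probability ratio: since $z^\pi(a,s) - z^\pit(a,s) = \log\frac{\pidist(a|s)}{\pitdist(a|s)} + c(s)$ for a per-state constant $c(s)$ (the log-ratio of the two normalizers), and since $\sum_a \pitdist(a|s)\, A^{\pit}(s,a) = 0$ by definition of the advantage, the constant $c(s)$ multiplies a centered quantity and vanishes. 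This turns the linearization into $\E_{(s,a)\sim\mu^{\pit}}[A^{\pit}(s,a)\log\frac{\pidist(a|s)}{\pitdist(a|s)}]$.

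For the divergence term, I would apply~\cref{lemma:divergence_equivalence} state-by-state (with $z = z^\pi(\cdot,s)$ and $z' = z^\pit(\cdot,s)$) to write $D_\phi(z^\pi(\cdot,s), z^\pit(\cdot,s)) = \text{KL}(\pitdist(\cdot|s)\,\|\,\pidist(\cdot|s)) + \Delta(s)$ with $\Delta(s)$ the gauge term from the lemma. Rewriting $-\text{KL}(\pitdist\,\|\,\pidist) = \sum_a \pitdist(a|s)\log\frac{\pidist(a|s)}{\pitdist(a|s)}$ and weighting by $\frac{1}{\eta}\,d^{\pit}(s)$ produces exactly $\frac{1}{\eta}\E_{(s,a)\sim\mu^{\pit}}[\log\frac{\pidist(a|s)}{\pitdist(a|s)}]$, while the $\Delta(s)$ contributions collect into a single constant $\Delta := \frac{1}{\eta}\sum_s d^{\pit}(s)\Delta(s)$. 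Adding the two reworked terms merges the $A^{\pit}$ coefficient with $\frac{1}{\eta}$, yielding $\E_{(s,a)\sim\mu^{\pit}}[(A^{\pit}(s,a)+\frac{1}{\eta})\log\frac{\pidist(a|s)}{\pitdist(a|s)}]$ and completing the claimed identity.

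The main obstacle is the bookkeeping around the constant and the gauge freedom in the logits: I must argue that the normalizer term $c(s)$ truly cancels in the linearization (which hinges on the advantage being centered under $\pitdist$), and that $\Delta(s)$ from the lemma depends on $z^\pi$ only through the per-state normalizer and can therefore be treated as a constant independent of the policy $\pidist$, so that it does not affect the $\argmax$ over $\theta$. Everything else is routine substitution and regrouping of the per-state sums into the $\mu^{\pit}$-expectation.
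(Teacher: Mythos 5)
Your proposal follows the paper's proof essentially step for step: both arguments use the centering of the advantage under $\pitdist$ (i.e., $\sum_a \pitdist(a|s)A^{\pit}(s,a)=0$) to trade logit differences for log-probability ratios in the linearization, and both invoke \cref{lemma:divergence_equivalence} state-by-state with weights $d^{\pit}(s)$ to convert the exponential-mirror-map Bregman divergence into the forward KL plus a gauge term, which then merges the $\nicefrac{1}{\eta}$ factor with the advantage. The treatment of $\Delta$ (its independence of $p^\pi$ and its sign) is inherited from the lemma in exactly the same way as in the paper, so nothing further is needed.
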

\end{thmbox}
\begin{proof}
Because $\sum_a p^\pit(a | s) A^\pit(s, a) = 0$, we can shift all values of $z$ by a term that does not depend on $a$ without changing the sum, in particular by $\log \left(\sum_{a'} \exp(z^\pi(a', s |\theta)\right)$. Thus,
\begin{align*}
\ell_t^{z^\pi, \phi, \eta}(\theta) &= J(\pit) + E_{(s, a) \sim \mu^{\pi_t}} A^{\pi_t}(s, a) \left(z^\pi(a, s | \theta) - \log \left(\sum_{a'} \exp(z^\pi(a', s |\theta))\right)\right)\\
&\quad - \frac{1}{\eta}\sum_s d^\pit(s) D_{\phi_z}(z^\pi(\cdot, s | \theta), z^\pi(\cdot, s, \theta_t))\\
&= J(\pit) + E_{(s, a) \sim \mu^{\pi_t}} A^{\pi_t}(s, a) \log p^\pi(a | s, \theta) - \frac{1}{\eta}\sum_s d^\pit(s) D_{\phi_z}(z^\pi(\cdot, s | \theta), z^\pi(\cdot, s| \theta_t))\\
&= J(\pit) + E_{(s, a) \sim \mu^{\pi_t}} A^{\pi_t}(s, a) \log p^\pi(a | s, \theta) - \frac{1}{\eta}\sum_s d^\pit(s) KL((p^{\pi'}(\cdot | s) || p^\pi(\cdot | s))\; ,
\end{align*}
where the last line is obtained using Lemma~\ref{lemma:divergence_equivalence}. Expanding the KL leads to the desired result.
\end{proof}

\myquote{\begin{thmbox}
\improvdirect*
\end{thmbox}
}
\begin{proof}
\citet{agarwal2019optimality} show that, when using the direct parameterization, $J$ is $\left(\frac{2 \gamma |A|}{(1 - \gamma)^3}\right)$-smooth w.r.t. the Euclidean distance. By using the properties of relative smoothness~\citep{lu2018relatively}, if the mirror map $\phi$ is $\mu$-strongly convex w.r.t. Euclidean distance, then $J$ is $L$-smooth with $L = \left(\nicefrac{2 \gamma |A|}{(1 - \gamma)^3 \, \mu}\right)$. Using the fact that negative entropy is $1$-strongly convex w.r.t. the $1$-norm, we can set $\eta = \nicefrac{(1 - \gamma)^3}{2 \gamma |A|}$ in~\cref{eq:fmd-direct-statewise-practical}. 
\end{proof}

To prove the value of $\eta$ guaranteeing improvement for the softmax parameterization, we first need to extend a lower bound result from~\citet{ghosh2020operator}:
\begin{lemma}
\label{proposition:lb_j_mu}
Let us assume that the rewards are lower bounded by $-c$ for some $c \in \R$. Then we have
\begin{align}
J(\pi) &\geq J(\pit) + E_{(s,a) \sim \mu^\pit} \left[\left(Q^\pit(s, a) + \frac{c}{1-\gamma}\right)\log\frac{p^\pi(a | s)}{p^\pit(a|s)}\right] \; . \label{eq:ghosh_extension}
\end{align}
\end{lemma}
\begin{proof}
Let us define the function $J_\nu$ for a policy $\nu$ as
\begin{align*}
    J_\nu(\pi)  &= \sum_{h=0}^{+\infty} \gamma^h \int_{\tau_h} (r(s_h, a_h) + c)\left(1 + \log\frac{\pi_h(\tau_h)}{\nu_h(\tau_h)}\right)\nu_h(\tau_h) \; d\tau_h - \frac{c}{1-\gamma}\; ,
\end{align*}
where $\tau_h$ is a trajectory of length $h$ that is a prefix of a full trajectory $\tau$ and $\pi_h$ is the policy restricted to trajectories of length $h$. We first show that it satisfies $J_\nu(\pi) \leq J(\pi)$ for any $\nu$ and any $\pi$ such that the support of $\nu$ covers that of $\pi$.

Indeed, we can rewrite
\begin{align*}
    J(\pi)  &= \int_\tau \left(R(\tau) + \frac{c}{1-\gamma}\right) \pi(\tau) \; d\tau - \frac{c}{1-\gamma}\\
            &= \int_\tau \left(\sum_h \gamma^h (r(a_h, s_h) + c)\right)\pi(\tau) \; d\tau  - \frac{c}{1-\gamma} \tag{using $\sum_h \gamma^h c = c/(1-\gamma)$}\\
            &= \sum_h \gamma^h \int_{\tau_h} (r(a_h, s_h) + c)\pi_h(\tau_h) \; d\tau_h   - \frac{c}{1-\gamma} \; ,
\end{align*}
where the last line is obtained by marginalizing over steps $h+1, \ldots, +\infty$ for all $h$ and all trajectories $\tau$.
Because $r(a_h, s_h) + c$ is positive, as the rewards are lower bounded by $-c$, we have
\begin{align*}
    J(\pi)  &= \sum_h \gamma^h \int_{\tau_h} (r(a_h, s_h) + c)\frac{\pi_h(\tau_h)}{\nu_h(\tau_h)}\nu_h(\tau_h) \; d\tau_h - \frac{c}{1-\gamma}\\
            &\geq \sum_h \gamma^h \int_{\tau_h} (r(a_h, s_h) + c)\left(1 + \log\frac{\pi_h(\tau_h)}{\nu_h(\tau_h)}\right)\nu_h(\tau_h) \; d\tau_h - \frac{c}{1-\gamma} \tag{using $x \geq 1 + \log x$}\\
            &= J_\nu(\pi) \; .
\end{align*}

Let us denote $J^{SA}_\nu$ the right-hand side of~\cref{eq:ghosh_extension}, i.e.:
\begin{align*}
    J^{SA}_\nu(\pi) &= J(\nu) + E_{(s,a) \sim \mu^\nu} \left[\left(Q^\nu(s, a) + \frac{c}{1-\gamma}\right)\log\frac{p^\pi(a | s)}{p^\nu(a|s)}\right] \; .
\end{align*}

We now prove that $J_\nu$ has the same gradient as $J^{SA}_\nu$:
\begin{align*}
\nabla_\theta J_\nu(\pi)    &= \nabla_\theta \left(\sum_h \gamma^h \int_{\tau_h} (r(a_h, s_h) + c)\left(1 + \log\frac{\pi_h(\tau_h)}{\nu_h(\tau_h)}\right)\nu_h(\tau_h) \; d\tau_h\right)\\
&= \nabla_\theta \left(\sum_h \gamma^h \int_{\tau_h} (r(a_h, s_h) + c)\log\pi_h(\tau_h)\nu_h(\tau_h)\right) \; d\tau_h\\
&= \sum_h \gamma^h \int_{\tau_h} (r(a_h, s_h) + c)\nabla_\theta\log\pi_h(\tau_h)\nu_h(\tau_h) \; d\tau_h\; ,
\end{align*}
where all terms independent of $\theta$ were moved outside of the gradient. As the log probability of a trajectory decomposes into a sum of the probabilities of actions given states and of the transition probabilities, and as the latter are independent of $\theta$, we get
\begin{align*}
\nabla_\theta J_\nu(\pi)&= \sum_h \gamma^h \int_{\tau_h} (r(a_h, s_h) + c)\nabla_\theta\log\pi_h(\tau_h)\nu_h(\tau_h) \; d\tau_h\\
&=\sum_h \gamma^h \int_{\tau_h} (r(a_h, s_h) + c)\left(\sum_{h'} \nabla_\theta\log p^\pi(a_{h'} | s_{h'})\right)\nu_h(\tau_h) \; d\tau_h\\
&= \int_{\tau} \sum_{h'} \nabla_\theta\log p^\pi(a_{h'} | s_{h'}) \left(\sum_{h=h'}^{+\infty} \gamma^h (r(a_h, s_h) + c)\right)\nu(\tau) \; d\tau\; .
\end{align*}
But
\begin{align*}
    \sum_{h=h'}^{+\infty} \gamma^h (r(a_h, s_h) + c) &= \gamma^{h'}\left(Q^\nu(s, a) + \frac{c}{1-\gamma}\right)\\
    \int_\tau \nu(\tau)d\tau 1_{a_{h'} = a}1_{s_{h'} = s} &= d^{h'}_\nu(s) \nu(a | s)\;,
\end{align*}
with $d^{h'}_\nu(s)$ the undiscounted probability of reaching state $s$ at timestep $h'$. Hence, we have
\begin{align*}
\nabla_\theta J_\nu(\pi)&= \int_{\tau} \sum_{h'} \nabla_\theta\log p^\pi(a_{h'} | s_{h'}) \left(\sum_{h=h'}^{+\infty} \gamma^h (r(a_h, s_h) + c)\right)\nu(\tau) \; d\tau\\
&= \sum_{h'} \sum_s \sum_a \nabla_\theta\log p^\pi(a | s) d^{h'}_\nu(s) \nu(a | s)\gamma^{h'}\left(Q^\nu(s, a) + \frac{c}{1-\gamma}\right)\\
&= \sum_{h'} \gamma^{h'} \sum_s d^{h'}_\nu(s) \sum_a \nabla_\theta\log p^\pi(a | s)  \nu(a | s)\left(Q^\nu(s, a) + \frac{c}{1-\gamma}\right)\\
&= \sum_s d^\nu(s)\sum_a \left(Q^\nu(s, a) + \frac{c}{1-\gamma}\right)\nu(a | s) \nabla_\theta\log p^\pi(a | s)\\
&= \nabla_\theta\left(\sum_s d^\nu(s)\sum_a \left(Q^\nu(s, a) + \frac{c}{1-\gamma}\right)\nu(a | s) \log p^\pi(a | s)\right)\\
&= \nabla_\theta\left(J(\nu) + E_{(s,a) \sim \mu^\nu} \left[\left(Q^\nu(s, a) + \frac{c}{1-\gamma}\right)\log\frac{p^\pi(a | s)}{p^\nu(a|s)}\right]\right)\\
&=\nabla_\theta J^{SA}_\nu(\pi)\; ,
\end{align*}
with $d^\nu(s)$ the unnormalized probability of $s$ under the \emph{discounted} stationary distribution.

Because $J_\nu$ and $J^{SA}_\nu$ have the same gradient, they differ by a constant, i.e. $J^{SA}_\nu = J_\nu + C$ for some $C$. But we also know that $J_\nu(\nu) = J(\nu)$, which means that
\begin{align*}
    C &= J^{SA}_\nu(\nu) - J_\nu(\nu)\\
    &= J^{SA}_\nu(\nu) - J(\nu)\\
    &= E_{(s,a) \sim \mu^\nu} \left[\left(Q^\nu(s, a) + \frac{c}{1-\gamma}\right)\log\frac{p^\nu(a | s)}{p^\nu(a|s)}\right]\\
    &= 0 \; .
\end{align*}

Hence, $J_\nu = J^{SA}_\nu$ and, becomes $J_\nu$ is a lower bound of $J$, we have
\begin{align}
    J(\pi) &\geq J(\nu) + \sum_s d^\nu(s)\sum_a \left(Q^\nu(s, a) + \frac{c}{1-\gamma}\right) p^\nu(a | s) \log\frac{ p^\pi(a | s)}{p^\nu(a|s)} \; .
\end{align}
Setting $\nu = \pit$ concludes the proof. 
\end{proof}

\myquote{\begin{thmbox}
\improvsoftmax*
\end{thmbox}
}
\begin{proof}
Assume
\begin{align}
    \eta &= \frac{1-\gamma}{r_m-r_l} \; .
\end{align}
We know from~\cref{prop:fmd-softmax-kl} that
\begin{align*}
\ell_t^{z^\pi, \phi, \eta}(\theta) &\leq J(\pit) + E_{(s, a) \sim \mu^{\pi_t}} \left(A^{\pi_t}(s, a) + \frac{1}{\eta}\right)\log \frac{p^\pi(a | s, \theta)}{p^\pit(a | s, \theta)}\; .
\end{align*}

Since the rewards are between $r_l$ and $r_m$, we have
\begin{align*}
\ell_t^{z^\pi, \phi, \eta}(\pi) &\leq J(\pit) + E_{(s, a) \sim \mu^{\pi_t}} \left[\left(A^{\pi_t}(s, a) + \frac{1}{\eta}\right)\log \frac{p^{\pi}(a | s)}{p^{\pit}(a | s)}\right]\\
&= J(\pit) + E_{(s, a) \sim \mu^{\pi_t}} \left[\left(A^{\pi_t}(s, a) +\frac{r_m-r_l}{1-\gamma}\right)\log \frac{p^{\pi}(a | s)}{p^{\pit}(a | s)}\right]\\
&= J(\pit) + E_{(s, a) \sim \mu^{\pi_t}} \left[\left(A^{\pi_t}(s, a) + V^{\pit}(s) + \left(\frac{r_m}{1- \gamma} - V^\pit(s)\right) -\frac{r_l}{1-\gamma}\right)\log \frac{p^{\pi}(a | s)}{p^{\pit}(a | s)}\right]\\
&= J(\pit) + E_{(s, a) \sim \mu^{\pi_t}} \left[\left(Q^{\pi_t}(s, a) -\frac{r_l}{1-\gamma}\right)\log \frac{p^{\pi}(a | s)}{p^{\pit}(a | s)}\right]\\
&\quad - E_{s\sim d^\pit}\left[\left(\frac{r_m}{1- \gamma} - V^\pit(s)\right)KL(p^\pit(\cdot | s)||p^\pi(\cdot | s))\right] \; .
\end{align*}
The last term on the RHS of the last equation is negative. Indeed, because the rewards are less than $r_m$, the value functions are less than $r_m / (1 - \gamma)$  and $r_m/(1-\gamma) - V^\pit(s)$ is positive. As the KL divergences are positive, the product of the two is positive and the whole term is negative because of the minus term.
Thus, we have
\begin{align*}
\ell_t^{z^\pi, \phi, \eta}(\pi) &\leq J(\pit) + E_{(s, a) \sim \mu^{\pi_t}} \left[\left(Q^{\pi_t}(s, a) -\frac{r_l}{1-\gamma}\right)\log \frac{p^{\pi}(a | s)}{p^{\pit}(a | s)}\right]\\
&\leq J(\pi) \tag{by~\cref{proposition:lb_j_mu}} \; .
\end{align*}
Hence, choosing $\eta = \frac{1-\gamma}{r_m-r_l}$ leads to an improvement guarantee. Because our rewards are bounded between 0 and 1, setting $r_m = 1$ and $r_l=0$ gives $\eta = 1-\gamma$.
This concludes the proof.
\end{proof}
\section{Experimental details in the bandit setting}
\label{app:bandits}
In this section, we detail the experimental setup for the bandit experiments in~\cref{sec:mab}.
 
We consider different $K$-armed Bernoulli bandit problems. For sEXP3, we specialising the update rule in~\cref{eq:fmd-softmax-kl-practical} to this multi-armed bandit case yielding: $p^\pitt(a) = p^\pit(a) (1 + \eta A^{\pi_t}(a))$, where $\eta$ needs to be chosen such that the probabilities are always positive. However, the computing the advantage either requires knowledge of the rewards of all arms, or an estimate thereof. Since EXP3 is an adversarial bandit algorithm and does not exploit the stochasticity in the rewards, to ensure a fair comparison, we cannot use such an estimate and thus replace the advantage with the immediate reward, leading to the final sEXP3 update:
\begin{align*}
p^\pitt(a) &= p^\pit(a) (1 + \eta \hat{r}_t(a)) \; ,
\end{align*}
where $\hat{r}_t(a)$ an estimator of the reward $r_t(a)$ obtained at round $t$. 

For sEXP3, if $A_t$ is the action taken at round $t$, then we use the importance weighted estimator $\hat{r}_t(a) = \sI\{A_t=1\} r_t(a)/\pi_t(a)$. For EXP3, we consider both the standard importance weighted estimator (referred to as IWEXP3 in the plots) and the loss based importance weighted estimator (referred to as LBIWEXP3 in the plots) for which $\hat{r}_t(a) =  \sI\{A_t=1\} (1-r_t(a))/\pi_t(a)$. 

Before describing our experimental setup, we emphasize that there are two different sources of randomness in our experiments. First, we have the \emph{environment seed} that controls the mean rewards in the bandit problem. Considering different environment seeds guarantees that our results are not specific to a particular choice of the rewards. Given a specific bandit problem, since EXP3 and sEXP3 are randomized bandit algorithms, there is a stochasticity in the actions chosen. We can use different \emph{agent seeds} to control the algorithm randomness. 

Following the evaluation protocol of~\citep{vaswani2020old}, we consider two classes of bandits with different action gaps (difference in the mean rewards) -- hard instances $(\Delta = 0.5)$ and easy instances $(\Delta = 0.1)$. The mean vector defining a Bernoulli bandit is then sampled entry wise (for each arm) from $\gU(0.5 - \Delta/2, 0.5 + \Delta/2)$. To obtain the plot in~\cref{sec:mab}, we run the experiment for 50 different environment seeds and one agent seed. We evaluated the three algorithms for Bernoulli bandits with  $K \in \{2, 10, 100\}$ arms and the difficulty of the problem, as determined by the action gap.  For each algorithm, we set the step-size via a grid search over $\eta \in \{0.5, 0.05, 0.005, 0.0005, 0.00005 \}$. The plot shows the regret corresponding to the step-size with lowest final average regret. 
\clearpage
\section{Experiments in the tabular setting} \label{app:tabular_experiments}
In this section\footnote{The code implementation for the algorithms and the environment corresponding to experiments presented in this section is available at \url{https://github.com/svmgrg/fma-pg}.}, we study the performance of four different policy gradient (PG) algorithms. Two of these can be directly obtained from the FMA-PG framework: sMDPO (FMA-PG with a softmax policy and log-sum-exp mirror map; see Eq. \ref{eq:fmd-softmax-kl-practical} in the main text) and MDPO (FMA-PG with direct parameterization and a negative entropy mirror map; see Eq. \ref{eq:fmd-softmax-statewise} in the main text). And the other two are the existing popular PG algorithms: TRPO and PPO. 

Further, to better understand the reason behind the performance of each of these methods, in addition to studying the objective functions used by these PG algorithms, we will also consider the impact of the optimization techqniques used to implement them. In particular, we will look at three different variants of sMDPO, MDPO, and TRPO based on whether they use a regularized objective with a fixed step-size (similar to the conventional sMDPO and MDPO), a regularized objective with Armijo line search, or a constrained objective with line search (similar to the conventional TRPO). 

\subsection{Algorithmic Details} \label{sec:tabular_algorithmic_details}
We begin by specifying the different surrogate objectives used by the different algorithms and the two optimization procedures we use for maximizing these objectives. The sMDPO and MDPO algorithms motivated by the FMA-PG framework can be considered as regularized algorithms, which can be summarized by
\begin{equation}
  \max_\theta \; \mathcal{J}_{\text{PG-Alg}} - \frac{1}{\eta} \mathcal{C}_{\text{PG-Alg}}, \label{eq:regularized_program}
\end{equation}
where the terms $\mathcal{J}_{\text{PG-Alg}}$ and $\mathcal{C}_{\text{PG-Alg}}$ are given in Table \ref{table:ablation_study}. One way of solving this objective is by gradient descent using a fixed step-size $\alpha$, as specified in Algorithm \ref{alg:generic}; we call this setting as \texttt{Regularized + fixed step-size}. We can equivalently solve such an unconstrained optimization problem by using an Armijo-style backtracking line search, which we call as \texttt{Regularized + line search}. Note that, we can use this same form to obtain a regularized version of the TRPO algorithm\footnote{For TRPO, this objective is almost the same as PPO with KL penalty (Eq. 8, \citet{schulman2017proximal}) except that PPO uses the advantage function and we used the action value function (which, as we discussed in the caption of Table \ref{table:ablation_study}, doesn't really matter). It is also similar to the objective stated in the TRPO paper (Section 4, \citet{schulman2015trust}) except that this has an average KL divergence instead of the max KL divergence given in the original paper.} as well.

  \begin{table}[!hbt]
  \centering
  \renewcommand{\arraystretch}{1.5}
  \renewcommand{\tabcolsep}{0.2cm}
  \begin{tabular}{c|l|c}
    \textbf{PG Alg.} & \hspace{2cm} \textbf{Objective} $(\mathcal{J})$ & \textbf{Constraint} $(\mathcal{C})$ \\
    \hline \hline
    sMDPO &
    $\sum_s d^{\pi_t}(s) \sum_a p^{\pi_t}(a | s) A^{\pi_t}(s, a) \log \frac{p^{\pi_\theta}(s, a)}{p^{\pi_t}(s, a)}$ &
    $\sum_s d^{\pi_t}(s) \cdot \text{KL}(p^{\pi_t}(\cdot | s) \| p^{\pi_\theta}(\cdot | s))$\\
    \hline
    TRPO &
    $\sum_s d^{\pi_t}(s) \sum_a p^{\pi_t}(a | s) Q^{\pi_t}(s, a) \frac{p^{\pi_\theta}(s, a)}{p^{\pi_t}(s, a)}$ &
    \texttt{(same as above)} \\
    \hline
    MDPO &
    $\sum_s d^{\pi_t}(s) \sum_a p^{\pi_t}(a | s) A^{\pi_t}(s, a) \frac{p^{\pi_\theta}(s, a)}{p^{\pi_t}(s, a)}$ &
    $\sum_s d^{\pi_t}(s) \cdot \text{KL}(p^{\pi_\theta}(\cdot | s) \| p^{\pi_t}(\cdot | s))$
  \end{tabular}
  \caption{The objectives and the constraints corresponding to the different PG algorithms. Note that the objective $\mathcal{J}$ for both TRPO and MDPO is essentially equivalent to each other, since maximizing either of them would lead to the same solution. The reason for this is that the difference between the two objectives is $\sum_s d^{\pi_t} V^{\pi_t}(s)$, which is independent of the policy weight $\theta$).} 
  \label{table:ablation_study}
\end{table}

On the other hand, the conventional TRPO algorithm instead solves a constrained optimization problem given by the equation
\begin{equation}
  \max_\theta \; \mathcal{J}_{\text{PG-Alg}} \quad \text{subject to } \quad \mathcal{C}_{\text{PG-Alg}} \leq \delta, \label{eq:constrained_program}
\end{equation}
with the terms $\mathcal{J}_{\text{PG-Alg}}$ and $\mathcal{C}_{\text{PG-Alg}}$ again given in Table \ref{table:ablation_study}. The regularized program of Eq. \ref{eq:regularized_program} can be considered a ``softer'' version of the constrained program of Eq. \ref{eq:constrained_program}. To solve the constrained optimization problem, we use the exact same process used by the TRPO paper \citep{schulman2015trust}: we use line search to find the maximal step-size that increases the objective value in the direction of maximum ascent while satisfying the constraint; see Section \ref{app:trpo} for details. We call this setting as \texttt{Constrained + line search}. Further, using Eq. \ref{eq:constrained_program}, we can also obtained constrained versions of sMDPO and MDPO. 

The motivation behind considering these three different variants for sMDPO, MDPO, and TRPO is to figure out how much of the performance difference between these algorithms comes from their exact objectives (Table~\ref{table:ablation_study}) and how much of it comes from the optimization techniques employed. We also summarize the gradient of these objectives in Table~\ref{table:ablation_study_grad}. The corresponding gradient derivations for the algorithms (including PPO) are presented in Appendix \ref{app:tabular_derivations}.

\begin{table}[!hbt]
  \centering
  \renewcommand{\arraystretch}{1.5}
  \renewcommand{\tabcolsep}{0.2cm}
  \begin{tabular}{c|l|c}
    \textbf{PG Alg.} & \hspace{1cm} \textbf{Grad. objective} $(\nabla_{\theta(s, a)} \mathcal{J})$ & \textbf{Grad. constraint} $(\nabla_{\theta(s, a)} \mathcal{C})$ \\
    \hline \hline
    sMDPO &
    $d^{\pi_t}(s) p^{\pi_t}(a|s) A^{\pi_t}(s, a)$ &
    $d^{\pi_t}(s) \left[ p^\pi(a | s) - p^{\pi_t}(a|s) \right]$ \\
    \hline
    TRPO &
    $d^{\pi_t}(s) p^\pi(a | s) \left[ Q^{\pi_t}(s, a) - \sum_b p^\pi(b | s) Q^{\pi_t}(s, b) \right]$ &
    \texttt{(same as above)} \\
    \hline
    MDPO &
    $d^{\pi_t}(s) p^\pi(a | s) \left[ A^{\pi_t}(s, a) - \sum_b p^\pi(b|s) A^{\pi_t}(s, b) \right]$ &
    $\begin{array}{c} d^{\pi_t}(s) p^\pi(a | s) \times \\ \left[ \log \frac{p^\pi(a | s)}{p^{\pi_t}(a | s)} - \text{KL}(p^\pi(\cdot | s) \| p^{\pi_t}(\cdot | s)) \right] \end{array}$
  \end{tabular}
  
  \caption{The gradients of the objectives and constraints w.r.t. the policy parameter corresponding to the different PG algorithms. Note that the gradient of the objective for both TRPO and MDPO is exactly equal to each other.}
  \label{table:ablation_study_grad}
\end{table}

\subsection{Empirical Details}
In all our tabular experiments, we assumed full access to the environment dynamics and used the analytically calculated expected gradient updates for all the algorithms, and therefore the results closely follow the theoretical properties of the PG methods. Doing so, essentially made this a study of the optimization properties of the four PG algorithms considered. We use a policy gradient agent with a tabular softmax policy parameterization, and evaluate the algorithms on two tabular episodic environments: CliffWorld (environment description and its properties are discussed in Figure \ref{fig: cliffworld}) and DeepSeaTreasure  (\citet{osband2019behaviour}; with $n=5$, discount factor $\gamma=0.9$, 25 different states, and two actions). 

\begin{figure}[ht]
  \centering
  \includegraphics[scale=0.36]{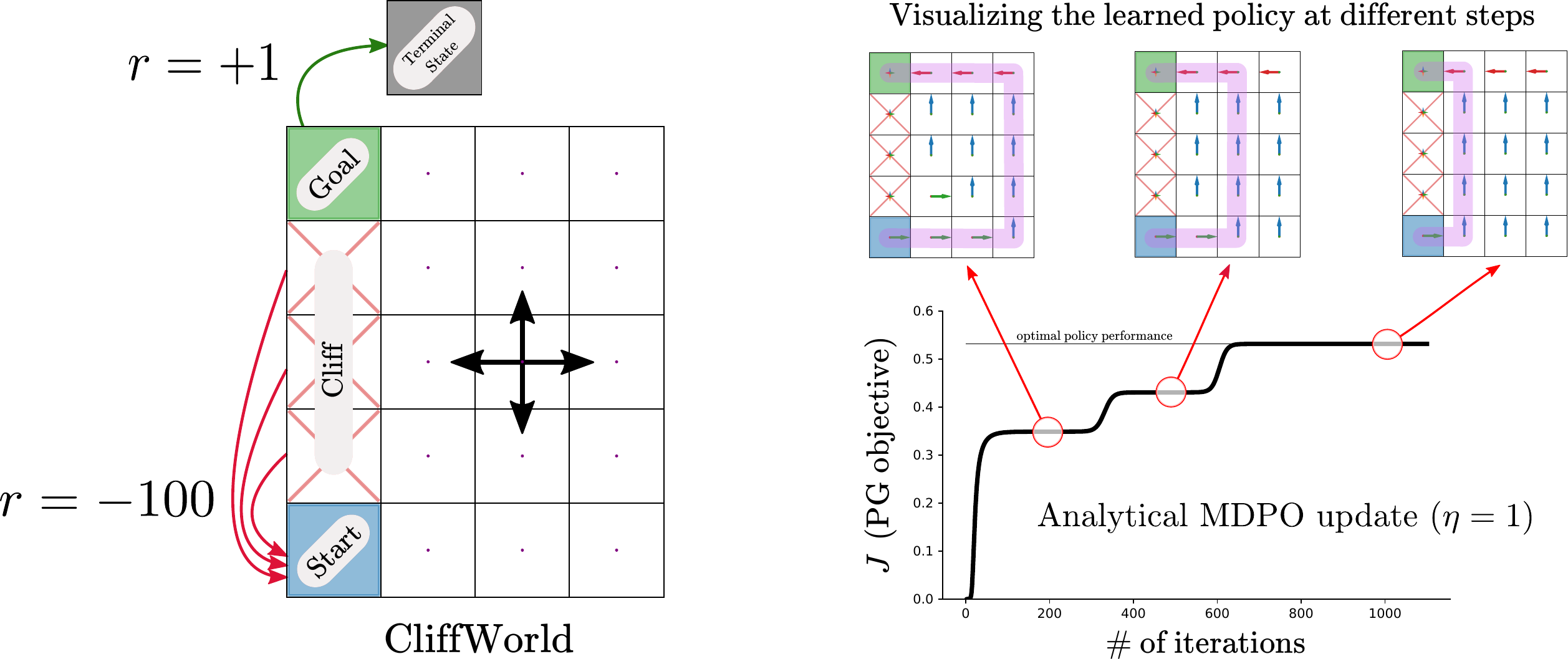}
  \caption{The episodic CliffWorld environment and the learning curve for MDPO on it illustrating three different locally optimal policies. \textbf{(Left)} We consider a variant of the CliffWorld environment (Example 6.6, \citet{sutton18book}) containing 21 different states and four actions per state. The agent starts in the \texttt{Start} state and has four cardinal actions which deterministically move it into the corresponding next state. The objective is to reach the \texttt{Goal} state as quickly as possible. If the agent falls into a state marked by \texttt{Cliff}, any subsequent action taken by it moves it back to the start state and yields a reward of $-100$. Similarly, once in the goal state, any action takes the agent into the terminal state and yields a reward of $+1$. All the other transitions have zero reward and the discount factor is $\gamma = 0.9$. It is easy to see that the optimal policy will have a value of $v^*(s_0) = 0 + \gamma \cdot 0 + \cdots + \gamma^5 \cdot 0 + \gamma^6 \cdot 1 = 0.9^6 = 0.53$. \textbf{(Right)} We show the learning curve for the analytical MDPO update using $\eta = 1$. This curve shows three different locally optimal policies. We later show in our experiments, that the different PG agents often get stuck on one of these policies.}
  \label{fig: cliffworld}
 \end{figure}
  
\begin{figure*}[!tbp]
    \centering
    \includegraphics[width=0.9\textwidth]{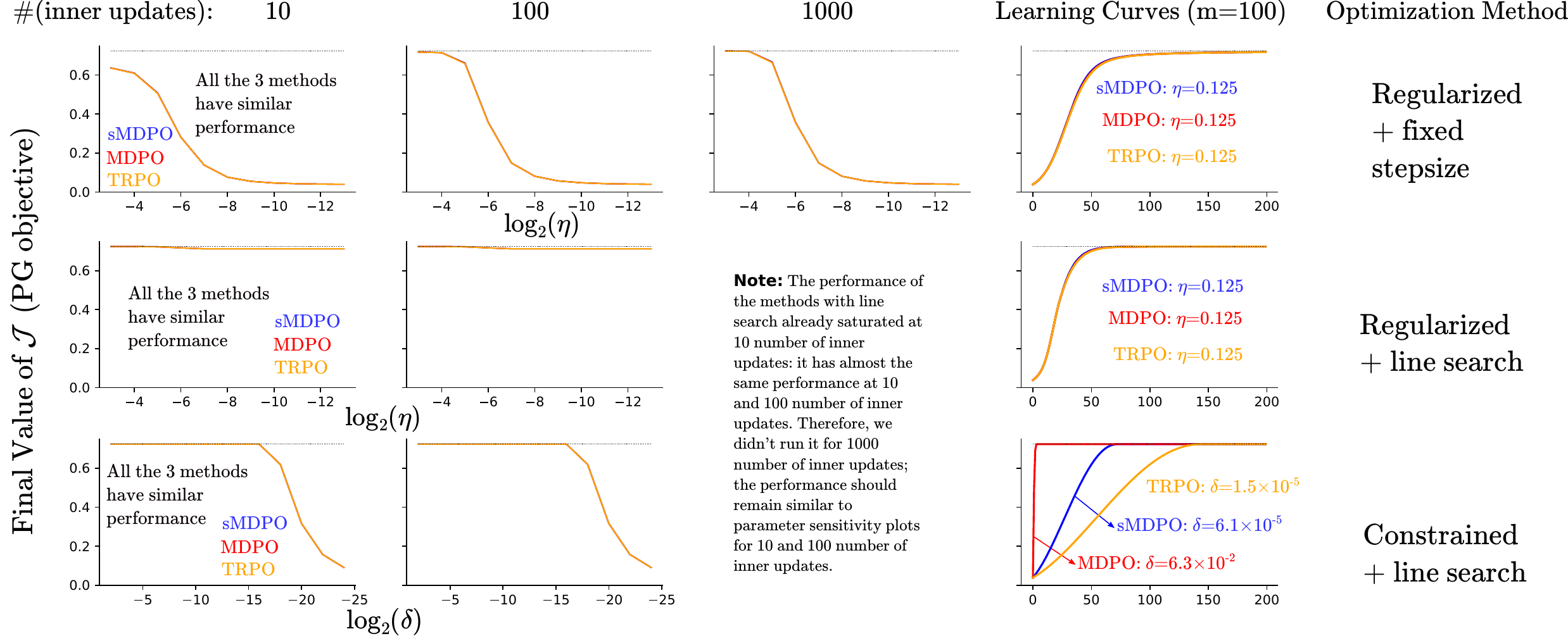}
    \caption{The parameter sensitivity plots for the  PG algorithms on the DeepSeaTreasure environment for different number of inner loop updates. The $x$ axis shows sweep over one parameter of the corresponding PG algorithm. And for each point on the $x$-axis, we chose the best performing second parameter of the algorithm: the inner loop step-size $\alpha$ for the first row, the Armijo constant for the second row, and there is no additional parameter for the last row. The faint black line near the top of each subplot depicts the value of the optimal policy. The last column shows the learning curves for the best performing parameter configuration for each method.
    \label{fig:dst_sensitivity_plots}}
\end{figure*}
 
\textbf{Hyperparameter configurations:} We trained each of the method for 2000 iterations for CliffWorld (200 iterations for DeepSeaTreasure). Each iteration consisted of multiple inner loop updates (we represent this number by $m$); these updates are performed in an off-policy fashion that is typical of all these algorithms (also see Algorithm \ref{alg:generic} in the main paper). We also swept over the relevant parameters of the PG algorithms. For the \texttt{Regularized} variants (both with and without line search) of sMDPO, MDPO, and TRPO, this was $\eta \in \{2^{-13}, 2^{-12}, \ldots, 2^{-1}\}$. For \texttt{fixed step-size} variant of  sMDPO, MDPO, TRPO, and PPO, we swept over the inner loop step-size $\alpha \in \{2^{-13}, 2^{-12}, \ldots, 2^3\}$ for CliffWorld (and $\alpha \in \{2^{-13}, 2^{-12}, \ldots, 2^{-2}\}$ for DeepSeaTreasure). For PPO, we additionally considered the clipping parameter $\epsilon \in \{0.01, 0.1, 0.2, 0.3 \ldots, 0.8, 0.9, 0.99\}$. For the \texttt{Regularized + line search} variant of  sMDPO, MDPO, TRPO, we also considered different Armijo constants in the set $\{0.0, 0.1, 0.3, 0.5, 0.7, 0.9, 0.99\}$, used a decay factor of $0.9$, initialized the maximal step-size to $10.0$ and fixed the warm-start factor to $2.0$. Finally, for the \texttt{Constrained + line search} variant of  sMDPO, MDPO, and TRPO, we swept over the trust region size $\delta \in \{2^{-24}, 2^{-22}, \ldots, 2^{-2}\}$, used a fixed backtracking decay parameter of $0.9$, an analytically obtained maximal step-size (see Appendix \ref{app:trpo}), and an Armijo constant of $0.0$ (i.e. no Armijo line search).

\subsection{Experimental Results}
\textbf{Learning Curves:} We show the learning curves corresponding to the best performing hyperparameters for the four algorithms conventional sMDPO and MDPO (\texttt{Regularized + fixed step-size}), conventional TRPO (\texttt{Constrained + line search}), and PPO in Figure \ref{fig:learning_curves} (main paper). To select the hyperparameters for each setting, we ran sweeps over different configurations and chose the ones that resulted in the best final performance at the end of 2000 iterations for CliffWorld (and 200 iterations for DeepSeaTreasure). From Figure \ref{fig:learning_curves}, we see that for CliffWorld, all the methods except PPO (PPO got stuck in a ``safe'' sub-optimal policy) were able to converge to the optimal policy, and TRPO had the fastest convergence (learned the optimal policy in less than 200 iterations). On the other hand for DeepSeaTreasure, we note that all the methods converged to the optimal policy, with PPO having the fastest convergence and TRPO the slowest. Additionally, we should also mention that the TRPO's update was the costliest (more than two times slower than the rest of the methods) in terms of wall time, likely because of the backtracking from the line-search. 

\textbf{Parameter Sensitivity and Ablation Study:} We show the final performance for sMDPO, MDPO, and TRPO in Figure \ref{fig:cliffworld_sensitivity_plots} (after 2000 iterations for CliffWorld; main paper) and Figure \ref{fig:dst_sensitivity_plots} (after 200 iterations for DeepSeaTreasure). The different rows correspond to the variants \texttt{Regularized + fixed step-size}, \texttt{Regularized + line search}, and \texttt{Constrained + line search} for each of the methods. And the different columns correspond to different number of inner loop updates\footnote{For the \texttt{Constrained + line search} of each method, we observed that the performance saturated after $m = 10$; in particular the sensitivity plots are identical for $m=10$ and $m=100$. Therefore, the performance at $m=1000$ should be exactly equivalent to the performance given at $m=100$, and consequently we skipped running that experiment.}. The last column in each row shows the learning curves for the best performing parameter setting. The $x$-axis on each subplot of the first two rows shows the regularization strength $\eta$. For the \texttt{Regularization + fixed stepsize} variant, we chose the best performing $\alpha$ for each $\eta$, and for \texttt{Regularized + line search} variant, we chose the best performing Armijo constant for each $\eta$. The last row (constrained variant) had only a single parameter, the trust region magnitude, $\delta$ that is shown on the $x$-axis.

From these figures, we see that as the value of $m$ increased, the performance of  the fixed step-size algorithms improved. We also note that adding line search to regularized methods improved their parameter sensitivity to a large extent. Although, for CliffWorld, none of the \texttt{Regularized + line search} variant were able to achieve the optimal policy. We believe that the reason for this is that with warm-start the algorithms started using very large stepsizes (as large as 1000), which lead to an early convergence to a locally optimal policy. To verify this further, we tried running these algorithms (experiments not shown here) without warm start and a maximal stepsize of 1.0; this allowed the methods to achieve the optimal policy for a small range of $\eta$ values, but also made them much more sensitive different values of $\eta$. For the constrained version, we see that all the three algorithms achieved the optimal policy and were generally insensitive to the $\delta$ values. This is likely because the constrained variant used the (near) optimal steepest ascent direction with the maximal stepsize, achieved via a backtracking line search. Finally, we note that for DeepSeaTreasure, all the methods had essentially the same performance and achieved the optimal policy in each case; we attribute this to the simplicity of the environment coupled with access to the true gradient updates.

We also provide the sensitivity plot for PPO for the two environments in Figure \ref{fig:ppo_plots}. We again see that increasing the number of inner loop updates helps the performance of PPO on both the environments. We also note that for no value of the parameters we tested, did PPO achieve the optimal policy on CliffWorld.

\begin{figure*}[!tbp]
    \centering
    \includegraphics[width=0.9\textwidth]{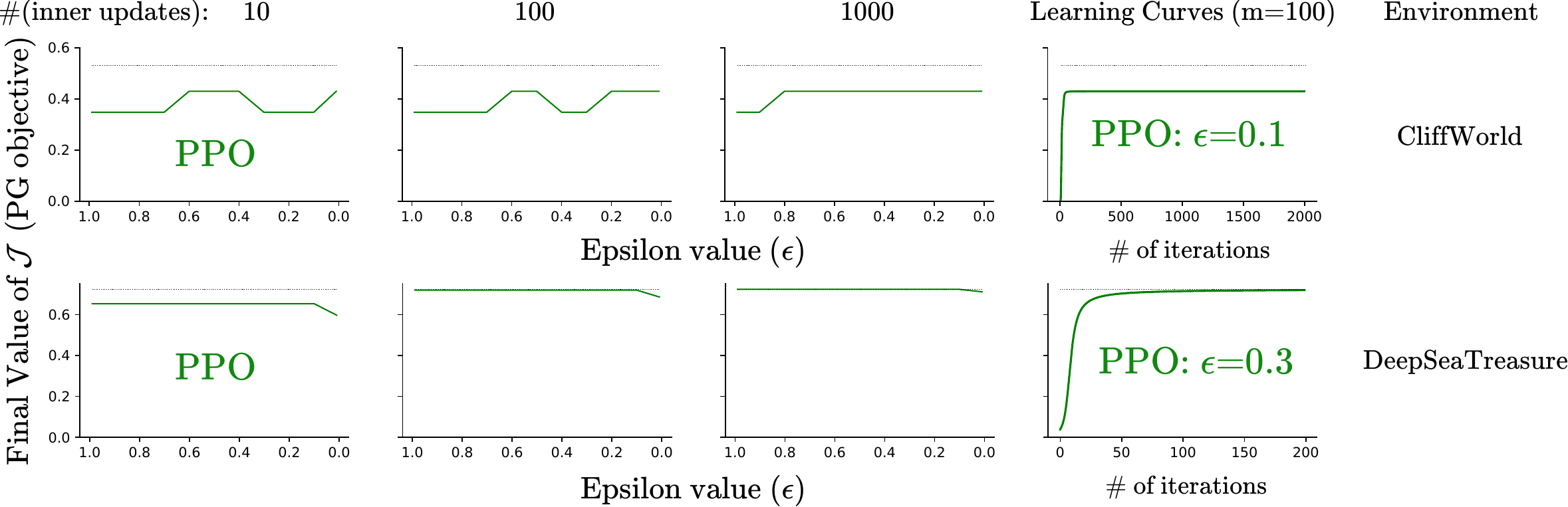}
    \caption{The parameter sensitivity plots for PPO on the CliffWorld and DeepSeaTreasure environments for different number of inner loop updates. The $x$ axis shows sweep over the clipping parameter $\epsilon$. The curve shows the final performance of the method for the best performing inner loop stepsize $\alpha$ given the $\epsilon$ value.
    \label{fig:ppo_plots}}
\end{figure*}

\subsection{Discussion} \label{app:tabular_discussion}
These experiments served to demonstrate three major points:
\begin{enumerate}
\item The optimization methods might matter as much as the policy gradient objectives being considered. We found that much of TRPO's performance came from formalizing the optimization problem as a constrained program and solving it using the optimal descent direction and a stepsize found using line search. In particular, not only did TRPO's performance suffer when we replaced the constraint with regularization, but the performance of both sMDPO and MDPO also improved significantly when we used TRPO style of optimization on their objectives. Additionally, we found that line search greatly improved the parameter sensitivity of all the algorithms.
\item The optimal $\eta$ values chosen by the \texttt{Regularized + fixed stepsize} variants of sMDPO and MDPO were much larger than the values predicted by our theoretical results. For instance, the maximal $\eta$ values for CliffWorld, as studied by the FMA-PG framework, are 
\begin{align*}
\eta_{\text{sMDPO}} &= \frac{1 - \gamma}{r_m - r_l} = \frac{1 - 0.9}{100 - (-1)} = 9.9 \times 10^{-4}, \\
\eta_{\text{MDPO}} &= \frac{(1 - \gamma)^{3}}{(r_m - r_l) \cdot 2\gamma |\mathcal{A}|} = \frac{(1 - 0.9)^{3}}{101 \times 2 \times 0.9 \times 4} = 1.4 \times 10^{-6}.
\end{align*}
Similarly for DeepSeaTreasure, they are
\begin{align*}
\eta_{\text{sMDPO}} &= \frac{1 - 0.9}{1 - (-0.01 / 5)} = 1.0 \times 10^{-2} \\
\eta_{\text{MDPO}} &= \frac{(1 - 0.9)^{3}}{(1 - (-0.01 / 5)) \times 2 \times 0.9 \times 2} = 2.8 \times 10^{-4}.
\end{align*}
Note that these values of $\eta$ are extremely small, and while the FMA-PG framework still guarantees policy improvement with these values, the convergence would be much slower than that shown in our experiments. This is natural since these bounds on $\eta$ are based on the smoothness of the policy objective $\mathcal{J}$ and from optimization literature, we know that such bounds are usually loose. Finally, also note that the optimal $\eta$ for sMDPO found by the experiments (for instance, that given in Figure \ref{fig:learning_curves}) is closer to that predicted by the theory, as compared to MDPO.
\item Each of the algorithms benefited from increasing the number of inner loop updates. These off-policy type of updates enables the PG algorithms to ``maximally squeeze'' out all the information present in the data they have already collected, thereby allowing them to improve their performance without any additional interaction with the environment. This demonstrates the strength of these methods over simpler algorithms, such as REINFORCE \citep{williams1992simple}, which have only a single update per batch of sampled data. 
\end{enumerate}

To conclude, our experiments suggest that the FMA-PG framework provides general purpose surrogate functions with policy improvement guarantees, which when combined with existing optimization techniques can yield policy gradient algorithms that are competitive to existing state-of-the-art methods.

\newpage
\section{Analytical Updates and Gradient Expressions for tabular PG Algorithms} \label{app:tabular_derivations}
In this section, we give the calculations for the closed form analytical solutions for sMDPO and MDPO, and the gradient expressions for all the four algorithms employed in our implementation for tabular PG algorithms given in Appendix \ref{app:tabular_experiments}.

\subsection{sMDPO}
We begin by considering the conventional sMDPO algorithm with a regularized objective.
\subsubsection{Closed Form Update with Softmax Representation}
Our goal is to find the closed form solution to the following optimization problem (from Eq. \ref{eq:fmd-softmax-kl-practical}, main paper):
\begin{equation}
  \pi_{t+1} = \arg\max_{\pi \in \Pi} \underbrace{\left[ \sum_s d^{\pi_t}(s) \sum_a p^{\pi_t}(a | s) \left(A^{\pi_t}(s, a) + \frac{1}{\eta} \right) \log \frac{p^\pi(s, a)}{p^{\pi_t}(s, a)} \right]}_{=: \ell^{\pi_t}_{\text{sMDPO}}}, \label{eq: optim_problem_sppo}
\end{equation}
subject to the constraints on policy $p^\pi$. We will solve this problem by assuming the policy $\pi \equiv p^\pi$ as an $|\mathcal{S}| \times |\mathcal{A}|$ table satisfying the standard constraints
\begin{align*}
  \sum_a p^\pi(a | s) &= 1,  \quad \forall s \in \mathcal{S} \\
  p^\pi(a | s) &\geq 0,  \quad \forall s \in \mathcal{S}, \; \forall a \in \mathcal{A}.
\end{align*}
We begin by formulating this problem using Lagrange multipliers $\{\lambda_s\}_{s \in \mathcal{S}}$ and $\{\lambda_{s, a}\}_{s, a \in \mathcal{S} \times \mathcal{A}}$ for all states $s$ and actions $a$:
\begin{align}
  \mathcal{L}(p^\pi, \lambda_s, \lambda_{s, a}) &= \sum_s d^{\pi_t}(s) \sum_a p^{\pi_t}(a | s) \left(A^{\pi_t}(s, a) + \frac{1}{\eta} \right) \log \frac{p^\pi(a | s)}{p^{\pi_t}(a | s)} \nonumber \\
  & \quad \; - \sum_{s, a} \lambda_{s, a} p^\pi(a | s) - \sum_s \lambda_{s} \bigg( \sum_a p^\pi(a | s) - 1 \bigg),
\end{align}
where we abused the notation, in $\mathcal{L}(p^\pi, \lambda_s, \lambda_{s, a})$, by using $\lambda_s$ to represent the set $\{\lambda_s\}_{s \in \mathcal{S}}$ and $\lambda_{s, a}$ to represent the set $\{\lambda_{s, a}\}_{s, a \in \mathcal{S} \times \mathcal{A}}$. The KKT conditions (Theorem 12.1, \citet{nocedal2006numerical}) for this constrained optimization problem can be written as:
\begin{align}
  \nabla_{p^\pi(b|x)} \mathcal{L}(p^\pi, \lambda_s, \lambda_{s, a}) &= 0, \quad \forall x \in \mathcal{S}, \; \forall b \in \mathcal{A} \tag{C1} \label{eq: KKT1} \\
  \sum_a p^\pi(a | s) &= 1, \quad \forall s \in \mathcal{S} \tag{C2} \label{eq: KKT2} \\
  p^\pi(a | s) &\geq 0, \quad \forall s \in \mathcal{S}, \; \forall a \in \mathcal{A} \tag{C3} \label{eq: KKT3} \\
  \lambda_s &\geq 0, \quad \forall s \in \mathcal{S} \tag{C4} \label{eq: KKT4} \\
  \lambda_{s} \bigg( \sum_a p^\pi(a | s) - 1 \bigg) &= 0, \quad \forall s \in \mathcal{S} \tag{C5} \label{eq: KKT5} \\
  \lambda_{s, a} p^\pi(a | s) &= 0, \quad \forall s \in \mathcal{S}, \; \forall a \in \mathcal{A}. \tag{C6} \label{eq: KKT6}
\end{align}

We now solve this system. Simplifying Eq. \ref{eq: KKT1} for an arbitrary state-action pair $(x, b)$ gives us:
\begin{align}
  \nabla_{p^\pi(b | x)} \mathcal{L}(p^\pi, \lambda_s, \lambda_{s, a}) &= d^{\pi_t}(x) p^{\pi_t}(b|x) \left( A^{\pi_t}(x, b) + \frac{1}{\eta} \right) \frac{1}{p^\pi(b|x)} - \lambda_{x, b} - \lambda_x = 0 \nonumber \\
  \Rightarrow \qquad \qquad \qquad \quad p^\pi(b | x) &= \frac{d^{\pi_t}(x) p^{\pi_t}(b|x) (1 + \eta A^{\pi_t}(x, b))}{\eta (\lambda_x + \lambda_{x, b})}. \label{eq: lagrangian_derivative_sppo}
\end{align}
Let us set 
\begin{equation}
  \lambda_{s, a} = 0, \quad \forall s \in \mathcal{S}, \; \forall a \in \mathcal{A}.
\end{equation}
Combining Eq. \ref{eq: lagrangian_derivative_sppo} with the second KKT condition gives us
\begin{equation}
  \lambda_s = \frac{1}{\eta} \sum_a d^{\pi_t}(s) p^{\pi_t}(a|s) (1 + \eta A^{\pi_t}(s, a)).
\end{equation}
Therefore, with the standard coverage assumption $d^{\pi_t}(s) > 0$, $p^\pi(a | s)$ becomes
\begin{equation}
  p^\pi(a | s) = \frac{p^{\pi_t}(a|s) (1 + \eta A^{\pi_t}(s, a))}{\sum_b p^{\pi_t}(b|s) (1 + \eta A^{\pi_t}(s, b))}.
\end{equation}
Note that $d^{\pi_t}(s), p^{\pi_t}(a|s) \geq 0$ for any state-action pair, since they are proper measures. We also need to ensure that
\begin{equation*}
  1 + \eta A^{\pi_t}(s, a) \geq 0
\end{equation*}
to satisfy the third and fourth KKT conditions. One straightforward way to achieve this is to define $p^\pi(a | s) = 0$ whenever $1 + \eta A^{\pi_t}(s, a) < 0$, and accordingly re-define $\lambda_s$. This gives us the final solution to our original optimization problem (Eq. \ref{eq: optim_problem_sppo}):
\begin{equation}
  \pi_{t+1} = p^\pi(s, a) = \frac{p^{\pi_t}(a|s) \max(1 + \eta A^{\pi_t}(s, a), 0)}{\sum_b p^{\pi_t}(b|s) \max(1 + \eta A^{\pi_t}(s, b), 0)}.
\end{equation}
However, it leaves us one last problem to deal with: ensuring that for any state $s$, there always exists at least one action $a$, such that $1 + \eta A^{\pi_t}(s, a) > 0$. This is not a problem since we can put a condition on $\eta$ in order to fulfill this constraint.

\subsubsection{Gradient of the Loss Function with Tabular Softmax Policy Parameterization}
Consider the softmax policy parameterization
\begin{equation}
  p^\pi(b | x) = \frac{e^{\theta(x, b)}}{\sum_c e^{\theta(x, c)}}, \label{eq: softmax}
\end{equation}
where $\theta(x, b)$ for all state-action pairs $(x, b)$ are action preferences maintained in a table (tabular parameterization). Also note that the derivative of the policy with respect to the action preferences is given by
\begin{equation}
  \frac{\partial}{\partial \theta(s, a)} p^\pi(b | x) = \mathbb{I}(x = s) \Big( \mathbb{I}(b = a) - p^\pi(a | x) \Big) p^\pi(b | x),
\end{equation}
where $\mathbb{I}(a = b)$ is the identity function when $a = b$ and zero otherwise. 
We will use gradient ascent to approximately solve Eq. \ref{eq: optim_problem_sppo}; to do that, the quantity of interest is
\begin{align}
  \frac{\partial}{\partial \theta(s, a)} \ell^{\pi_t}_{\text{sMDPO}} &= \sum_{x \in \mathcal{S}} \sum_{b \in \mathcal{A}} \left[ \frac{\partial}{\partial \theta(s, a)} p^\pi(b | x) \right] \left[ \frac{\partial}{\partial p^\pi(b | x)} \ell^{\pi_t}_{\text{sMDPO}} \right] \tag*{(using total derivative)} \\
  &= \sum_{x, b} \Big[ \mathbb{I}(x = s) \Big( \mathbb{I}(b = a) - p^\pi(a | x) \Big) p^\pi(b | x) \Big] \left[ d^{\pi_t}(x) p^{\pi_t}(b|x) \left( A^{\pi_t}(x, b) + \frac{1}{\eta} \right) \frac{1}{p^\pi(b|x)} \right] \nonumber \\
  &= \E_{X \sim d^{\pi_t}, B \sim p^{\pi_t}(\cdot | X)} \left[ \mathbb{I}(X = s) \Big( \mathbb{I}(B = a) - p^\pi(a | x) \Big) \left( A^{\pi_t}(X, B) + \frac{1}{\eta} \right) \right] \\
  &= d^{\pi_t}(s) \sum_b \Big( \mathbb{I}(b = a) - p^\pi(a | s) \Big) p^{\pi_t}(b|s) \left( A^{\pi_t}(s, b) + \frac{1}{\eta} \right) \nonumber \\
  &= d^{\pi_t}(s) \left[ p^{\pi_t}(a|s) \left( A^{\pi_t}(s, a) + \frac{1}{\eta} \right) - p^\pi(a | s) \sum_b p^{\pi_t}(b|s) \left(A^{\pi_t}(s, b) + \frac{1}{\eta} \right) \right] \nonumber \\
  &= d^{\pi_t}(s) \left[ p^{\pi_t}(a|s) \left( A^{\pi_t}(s, a) + \frac{1}{\eta} \right) - \frac{p^\pi(a | s)}{\eta} \right], \nonumber
\end{align}
Now we can simply update the inner loop of FMA-PG (Algorithm 1, main paper) via gradient ascent:
\begin{equation}
  \theta(s, a) \; \leftarrow \; \theta(s, a) + \alpha d^{\pi_t}(s) \left[ p^{\pi_t}(a|s) \left( A^{\pi_t}(s, a) + \frac{1}{\eta} \right) - \frac{p^\pi(a | s)}{\eta} \right].
\end{equation}

\subsection{Mirror Descent Policy Optimization (MDPO)}
In this section, we study the MDPO type FMA-PG update (Eq. \ref{eq:fmd-softmax-statewise} in main paper). We first calculate the analytical solution to that optimization problem, and then calculate its gradient which we use in the experiments. However, in the analysis that follows, we we replace the advantage function $A^{\pi_t}$ with the action-value function $Q^{\pi_t}$ to make it exactly same as the original MDPO \citep{tomar2020mirror} update.

\subsubsection{Closed Form Update with Direct Representation}
While giving the MDPO type FMA-PG equation (Eq. \ref{eq:fmd-softmax-statewise}), the paper considers the direct representation along with tabular parameterization of the policy, albeit with a small change in notation as compared to the previous subsection: $\pi(a|s) \equiv p^\pi(a|s, \theta)$. However, since this notation is more cumbersome, we will stick with our the notation of the previous subsection: $\pi(a|s) \equiv p^\pi(a|s)$. The constraints on the parameters $p^\pi(s, a)$ are the same as before: $\sum_a p^\pi(a | s) = 1, \; \forall s \in \mathcal{S}$; and $p^\pi(a | s) \geq 0, \; \forall s \in \mathcal{S}, \; \forall a \in \mathcal{A}$. Our goal, this time, is to solve the following optimization problem (from Eq. 6, main paper)
\begin{equation}
  \pi_{t+1} = \arg\max_{\pi \in \Pi} \underbrace{\left[ \sum_s d^{\pi_t}(s) \sum_a p^{\pi_t}(a|s) \left( Q^{\pi_t}(s, a) \frac{p^\pi(a | s)}{p^{\pi_t}(a | s)} - \frac{1}{\eta} D_\phi (p^\pi(\cdot | s), p^{\pi_t}(\cdot | s)) \right) \right]}_{=: \ell^{\pi_t}_{\text{MDPO}}}, \label{eq: optim_problem_mdpo}
\end{equation}
with the mirror map as the negative entropy (Eq. 5.27, \citet{beck2003mirror}). This particular choice of the mirror map simplifies the Bregman divergence as follows
\begin{equation}
  D_\phi (p^\pi(\cdot | s), p^{\pi_t}(\cdot | s)) = \text{KL}(p^\pi(\cdot | s) \| p^{\pi_t}(\cdot | s)) := \sum_a p^\pi(a | s) \log \frac{p^\pi(a | s)}{p^{\pi_t}(a | s)}.
\end{equation}
The optimization problem (Eq. \ref{eq: optim_problem_mdpo}) then simplifies to
\begin{equation}
  \pi_{t+1} = \arg\max_{\pi \in \Pi} \left[ \sum_s d^{\pi_t}(s) \sum_a p^{\pi_t}(a|s) \left( Q^{\pi_t}(s, a) \frac{p^\pi(a | s)}{p^{\pi_t}(a | s)} - \frac{1}{\eta} \sum_{a'} p^\pi(a' | s) \log \frac{p^\pi(a' | s)}{p^{\pi_t}(a' | s)} \right) \right].
\end{equation}

Proceeding analogously to the previous subsection, we use Lagrange multipliers $\lambda_s$, $\lambda_{s, a}$ for all states $s$ and actions $a$ to obtain the function
\begin{align}
  \mathcal{L}(p^\pi, \lambda_s, \lambda_{s, a}) &= \sum_s d^{\pi_t}(s) \sum_a p^{\pi_t}(a|s) Q^{\pi_t}(s, a) \frac{p^\pi(a | s)}{p^{\pi_t}(a | s)} - \frac{1}{\eta} \sum_s d^{\pi_t}(s) \sum_{a'} p^\pi(a' | s) \log \frac{p^\pi(a' | s)}{p^{\pi_t}(a' | s)} \nonumber \\
  & \quad \; - \sum_{s, a} \lambda_{s, a} p^\pi(a | s) - \sum_s \lambda_{s} \bigg( \sum_a p^\pi(a | s) - 1 \bigg).
\end{align}
The KKT conditions are exactly the same as before (Eq. \ref{eq: KKT1} to Eq. \ref{eq: KKT6}).

Again, we begin by solving the first KKT condition:
\begin{align}
  \nabla_{p^\pi(b | x)} \mathcal{L}(p^\pi, \lambda_s, \lambda_{s, a}) &= d^{\pi_t}(x) p^{\pi_t}(b|x) \frac{Q^{\pi_t}(x, b)}{p^{\pi_t}(b | x)} - \frac{d^{\pi_t}(x)}{\eta} \left[ \log \frac{p^\pi(b | x)}{p^{\pi_t}(b | x)} + 1 \right] - \lambda_{x, b} - \lambda_x \nonumber \\
  &= \frac{d^{\pi_t}(x)}{\eta} \left[ \eta Q^{\pi_t}(x, b) - \log \frac{p^\pi(b | x)}{p^{\pi_t}(b | x)} - 1 - \frac{\eta (\lambda_{x, b} + \lambda_x)}{d^{\pi_t}(x)} \right] \nonumber \\
  &= 0 \nonumber \\
  \Rightarrow \qquad \qquad \log \frac{p^\pi(b | x)}{p^{\pi_t}(b | x)} &= \eta Q^{\pi_t}(x, b) - \frac{\eta (\lambda_{x, b} + \lambda_x)}{d^{\pi_t}(x)} - 1 \nonumber \\
  \Rightarrow \qquad \qquad \qquad p^\pi(b | x) &= p^{\pi_t}(b | x) \cdot e^{\eta Q^{\pi_t}(x, b)} \cdot e^{- \frac{\eta (\lambda_{x, b} + \lambda_x)}{d^{\pi_t}(x)} - 1}, \label{eq: lagrangian_derivative_mdpo}
\end{align}
where in the fourth line, we used the assumption that $d^{\pi_t}(x) > 0$ for all states $x$. We again set
\begin{equation}
  \lambda_{s, a} = 0, \quad \forall s \in \mathcal{S}, \; \forall a \in \mathcal{A}.
\end{equation}
And, we put Eq. \ref{eq: lagrangian_derivative_mdpo} in the second KKT condition to get
\begin{equation}
  e^{- \frac{\eta \lambda_x}{d^{\pi_t}(x)} - 1} = \left( \sum_b p^{\pi_t}(b | x) \cdot e^{\eta Q^{\pi_t}(x, b)} \right)^{-1}.
\end{equation}
Therefore, we obtain
\begin{equation}
  p^\pi(a | s) = \frac{p^{\pi_t}(a | s) \cdot e^{\eta Q^{\pi_t}(s, a)}}{\sum_b p^{\pi_t}(b | s) \cdot e^{\eta Q^{\pi_t}(s, b)}}.
\end{equation}
This leaves us one last problem to deal with: ensuring $\lambda_s \geq 0$ for all states $s$. Again, we can set the step-size $\eta$ to ensure this constraint. 

\subsubsection{Gradient of the MDPO Loss Function with Tabular Softmax Parameterization}
We again use the tabular softmax policy parameterization given by Eq. \ref{eq: softmax}, and compute $\nabla_{\theta(s, a)} \ell^{\pi_t}_{\text{MDPO}}$ for the MDPO loss (we substitute $Q^{\pi_t}$ with $A^{\pi_t}$ in this calculation):
\begin{align}
  \frac{\partial}{\partial \theta(s, a)} \ell^{\pi_t}_{\text{MDPO}} &= \sum_{x, b} \left[ \frac{\partial}{\partial \theta(s, a)} p^\pi(b | x) \right] \left[ \frac{\partial}{\partial p^\pi(b | x)} \ell^{\pi_t}_{\text{MDPO}} \right] \tag*{(using total derivative)} \\
  &= \sum_{x, b} \Big[ \mathbb{I}(x = s) \Big( \mathbb{I}(b = a) - p^\pi(a | x) \Big) p^\pi(b | x) \Big] \left[ \frac{d^{\pi_t}(x)}{\eta} \left( \eta A^{\pi_t}(x, b) - \log \frac{p^\pi(b | x)}{p^{\pi_t}(b | x)} - 1 \right) \right] \nonumber \\
  &= \frac{d^{\pi_t}(s)}{\eta} \sum_b \Big( \mathbb{I}(b = a) - p^\pi(a | s) \Big) p^\pi(b | s) \left[ \eta A^{\pi_t}(s, b) - \log \frac{p^\pi(b | s)}{p^{\pi_t}(b | s)} - 1 \right] \nonumber \\
  &= \frac{d^{\pi_t}(s)}{\eta} p^\pi(a | s) \left[ \eta A^{\pi_t}(s, a) - \eta \sum_b p^\pi(b|s) A^{\pi_t}(s, b) - \log \frac{p^\pi(a | s)}{p^{\pi_t}(a | s)} + \text{KL}(p^\pi(\cdot | s) \| p^{\pi_t}(\cdot | s)) \right], \nonumber
\end{align}
where in the last line, we used the fact that
\begin{equation*}
  \sum_b p^\pi(b | s) \left[ \eta A^{\pi_t}(s, b) - \log \frac{p^\pi(b | s)}{p^{\pi_t}(b | s)} - 1 \right] = \eta \sum_b p^\pi(b|s) A^{\pi_t}(s, b) - \text{KL}(p^\pi(\cdot | s) \| p^{\pi_t}(\cdot | s)) - 1.
\end{equation*}

\subsection{Trust Region Policy Optimization (TRPO)} \label{app:trpo}
At each step of the policy update, TRPO (Eq. 14, \citet{schulman2015trust}) solves the following problem:
\begin{equation}
  \max_\theta \; \underbrace{\sum_s d^{\pi_t}(s) \sum_a p^{\pi_\theta}(a | s) Q^{\pi_t}(s, a)}_{=: \mathcal{J}_{\text{TRPO}}} \qquad \text{subject to } \underbrace{\sum_s d^{\pi_t}(s) \cdot \text{KL}(p^{\pi_t}(\cdot | s) \| p^{\pi_\theta}(\cdot | s))}_{=: \mathcal{C}_{\text{TRPO}}} \leq \delta.  
\end{equation}
Unlike the sMDPO and the MDPO updates, an analytical solution cannot be derived for this update (since it would require solving a system of non-trivial non-linear equations). Therefore, we will use gradient based methods to approximately solve this problem. From Appendix C of \citet{schulman2015trust}, the descent direction is given by $s \approx A^{-1} g$ where the vector $g$ is defined as $g_{(s, a)} := \frac{\partial}{\partial \theta(s, a)} \mathcal{J}_{\text{TRPO}}$, and the matrix $A$ is defined as $A_{(s, a), (s', a')} := \frac{\partial}{\partial \theta(s, a)} \frac{\partial}{\partial \theta(s', a')} \mathcal{C}_{\text{TRPO}}$. We analytically compute the expression for this direction assuming a softmax policy (Eq. \ref{eq: softmax}). The vector $g$ can be readily calculated as
\begin{align}
  \frac{\partial}{\partial \theta(s, a)} \mathcal{J}_{\text{TRPO}} &= \sum_x d^{\pi_t}(x) \sum_b Q^{\pi_t}(x, b) \frac{\partial p^{\pi_\theta}(b | x)}{\partial \theta(s, a)} \nonumber \\
  &= \sum_x d^{\pi_t}(x) \sum_b Q^{\pi_t}(x, b) \mathbb{I}(x = s) \Big( \mathbb{I}(b = a) - p^{\pi_\theta}(a | x) \Big) p^{\pi_\theta}(b | x) \nonumber \\
  &= \sum_x d^{\pi_t}(x) \mathbb{I}(x = s) \left[ \sum_b \mathbb{I}(b = a) p^{\pi_\theta}(b | x) Q^{\pi_t}(x, b) - p^{\pi_\theta}(a | x) \sum_b p^{\pi_\theta}(b | x) Q^{\pi_t}(x, b) \right] \nonumber \\
  &= d^{\pi_t}(s) p^{\pi_\theta}(a | s) \left[ Q^{\pi_t}(s, a) - \sum_b p^{\pi_\theta}(b | s) Q^{\pi_t}(s, b) \right]. \label{eq: trpo_gradient}
\end{align}
For calculating the matrix $A$, we use the law of total derivative to obtain
\begin{align}
  \frac{\partial}{\partial \theta(s, a)} \mathcal{C}_{\text{TRPO}} &= \sum_{x, b} \left[ \frac{\partial}{\partial \theta(s, a)} p^{\pi_\theta}(b | x) \right] \left[ \frac{\partial}{\partial p^{\pi_\theta}(b | x)} \sum_s d^{\pi_t}(s) \sum_a p^{\pi_t}(a | s) \log \frac{p^{\pi_t}(a | s)}{p^{\pi_\theta}(a | s)} \right] \nonumber \\
  &= \sum_{x, b} \left[ \mathbb{I}(x = s) \Big( \mathbb{I}(b = a) - p^{\pi_\theta}(a | x) \Big) p^{\pi_\theta}(b | x) \right] \left[ - d^{\pi_t}(x) \frac{p^{\pi_t}(b | x)}{p^{\pi_\theta}(b | x)} \right] \nonumber \\
  &= - d^{\pi_t}(s) \sum_b \Big( \mathbb{I}(b = a) - p^{\pi_\theta}(a | s) \Big) p^{\pi_t}(b | s) \nonumber \\
  &= - d^{\pi_t}(s) \left[ \sum_b \mathbb{I}(b = a) p^{\pi_t}(b | s) - p^{\pi_\theta}(a | s) \sum_b p^{\pi_t}(b | s) \right] \nonumber \\
  &= d^{\pi_t}(s) \Big[ p^{\pi_\theta}(a | s) - p^{\pi_t}(a | s) \Big]. \label{eq: trpo_kl_mid}
\end{align}
Finally, using the above result yields
\begin{align}
  \frac{\partial}{\partial \theta(s, a)} \frac{\partial}{\partial \theta(s', a')} \mathcal{C}_{\text{TRPO}} &= \frac{\partial}{\partial \theta(s, a)} d^{\pi_t}(s') \Big[ p^{\pi_\theta}(a' | s') - p^{\pi_t}(a' | s') \Big] \nonumber \\
  &= d^{\pi_t}(s') \cdot \frac{\partial}{\partial \theta(s, a)} p^{\pi_\theta}(a' | s') \nonumber \\
  &= \mathbb{I}(s'=s) \cdot d^{\pi_t}(s') \Big( \mathbb{I}(a'=a) - p^{\pi_\theta}(a | s') \Big) p^{\pi_\theta}(a' | s') \\
  \Rightarrow \qquad \qquad \qquad \quad A_{(s, :), (s, :)} &= d^{\pi_t}(s) \Big( \text{diag} (p^{\pi_\theta}(\cdot | s)) - p^{\pi_\theta}(\cdot | s) p^{\pi_\theta}(\cdot | s)^\top \Big),
\end{align}
where $p^{\pi_\theta}(\cdot | s) \in \mathbb{R}^{|\mathcal{A}|}$ is the vector defined as $[p^{\pi_\theta}(\cdot | s)]_a = p^{\pi_\theta}(a | s)$ and $A_{(s, :), (s, :)}$ denotes the square sub-block of the matrix $A$ corresponding to the given state $s$ and all the actions. In our experiments, since our $A$ matrix is small, we directly take its inverse to compute the update direction, thereby bypassing the conjugate method. Once we have the update direction, we then compute the maximal stepsize $\beta$ and perform a backtracking line search similar to the TRPO paper.

\subsection{Proximal Policy Optimization (PPO)} \label{app:ppo}
The Proximal Policy Optimization algorithm \citep{schulman2017proximal} solves the following optimization problem at each iteration step:
\begin{equation}
  \max_\theta \; \underbrace{\sum_s d^{\pi_t}(s) \sum_a p^{\pi_t}(a | s) \cdot \min \left( \begin{matrix} \frac{p^{\pi_\theta}(a | s)}{p^{\pi_t}(a | s)} A^{\pi_t}(s, a), \\ \text{clip} \left[\frac{p^{\pi_\theta}(a | s)}{p^{\pi_t}(a | s)}, 1 - \epsilon, 1 + \epsilon \right] A^{\pi_t}(s, a) \end{matrix} \right)}_{=: \mathcal{J}_{\text{PPO}}}.
\end{equation}
The gradient of the objective $\mathcal{J}_{\text{PPO}}$ can be shown to be equivalent to
\begin{equation}
  \nabla \mathcal{J}_{\text{PPO}} = \sum_s d^{\pi_t}(s) \sum_a p^{\pi_t}(a | s) \cdot \mathbb{I} \Big( \text{cond}(s, a) \Big) \frac{\nabla p^{\pi_\theta}(a | s)}{p^{\pi_t}(a | s)} A^{\pi_t}(s, a),
\end{equation}
where 
\begin{equation}
  \text{cond}(s, a) = \left( A^{\pi_t}(s, a) > 0 \;\bigwedge\; \frac{p^{\pi_\theta}(a | s)}{p^{\pi_t}(a | s)} < 1 + \epsilon \right) \;\bigvee\; \left( A^{\pi_t}(s, a) < 0 \;\bigwedge\; \frac{p^{\pi_\theta}(a | s)}{p^{\pi_t}(a | s)} > 1 - \epsilon \right).
\end{equation}
Repeating our usual drill, we assume a softmax policy to obtain:
\begin{align}
  & \frac{\partial}{\partial \theta(s, a)} \mathcal{J}_{\text{PPO}} \nonumber \\
  &= \sum_x d^{\pi_t}(x) \sum_b \mathbb{I} \Big( \text{cond}(x, b) \Big) \frac{\partial p^{\pi_\theta}(b | x)}{\partial \theta(s, a)} A^{\pi_t}(x, b) \nonumber \\
  &= \sum_x d^{\pi_t}(x) \sum_b \mathbb{I} \Big( \text{cond}(x, b) \Big) \mathbb{I}(x = s) \Big( \mathbb{I}(b = a) - p^{\pi_\theta}(a | x) \Big) p^{\pi_\theta}(b | x) A^{\pi_t}(x, b) \nonumber \\
  &= d^{\pi_t}(s) \Bigg[ \sum_b \mathbb{I}(b = a) \mathbb{I} \Big( \text{cond}(s, b) \Big) p^{\pi_\theta}(b | s) A^{\pi_t}(s, b) - p^{\pi_\theta}(a | s) \sum_b \mathbb{I} \Big( \text{cond}(s, b) \Big) p^{\pi_\theta}(b | s) A^{\pi_t}(s, b) \Bigg] \nonumber \\
    &= d^{\pi_t}(s) p^{\pi_\theta}(a | s) \left[ \mathbb{I} \Big( \text{cond}(s, a) \Big) A^{\pi_t}(s, a) - \sum_b p^{\pi_\theta}(b | s) \mathbb{I} \Big( \text{cond}(s, b) \Big) A^{\pi_t}(s, b) \right]. \label{eq: ppo_gradient}
\end{align}
The PPO gradient (Eq. \ref{eq: ppo_gradient}) is exactly the same as the TRPO gradient (Eq. \ref{eq: trpo_gradient}) except for the additional condition on choosing only specific state-action pairs while calculating the difference between advantage under the current policy and the approximate change in advantage under the updated policy.

\subsection{MDPO with Constraints}
In this section, we calculate the second derivative of the MDPO constraint as given in Table \ref{table:ablation_study}. This will allow us compute the Hessian $A^{\text{MDPO}}$, which is the analog of the $A$ matrix from TRPO implementation, and help us implement MDPO with a constrained objective and line search.

Continuing from the gradient of the MDPO constraint given in Table \ref{table:ablation_study_grad}, we get
\begin{align}
  \frac{\partial}{\partial p^\pi(b | x)} \frac{\partial \mathcal{C}_{\text{MDPO}}}{\partial \theta(s', a')} &= \frac{\partial}{\partial p^\pi(b | x)} d^{\pi_t}(s') p^\pi(a' | s') \left( \log \frac{p^\pi(a' | s')}{p^{\pi_t}(a' | s')} - \sum_c p^\pi(c | s') \log \frac{p^\pi(c | s')}{p^{\pi_t}(c | s')} \right) \nonumber \\
  &= \mathbb{I}(x = s') d^{\pi_t}(s') \Bigg[ \mathbb{I}(b = a') \left( \log \frac{p^\pi(a' | s')}{p^{\pi_t}(a' | s')} - \sum_c p^\pi(c | s') \log \frac{p^\pi(c | s')}{p^{\pi_t}(c | s')} \right) \nonumber \\
    & \qquad \qquad \qquad \qquad \quad + p^\pi(a' | s') \frac{\mathbb{I}(b = a')}{p^\pi(a' | s')} - p^\pi(a' | s') \left( \log \frac{p^\pi(b | s')}{p^{\pi_t}(b | s')} + 1 \right) \Bigg] \nonumber \\
  &= \mathbb{I}(x = s') d^{\pi_t}(s') \Bigg[ \mathbb{I}(b = a') \cdot T_{\text{itd}}(s', a') - p^\pi(a' | s') \left( \log \frac{p^\pi(b | s')}{p^{\pi_t}(b | s')} + 1 \right) \Bigg],
\end{align}
where we introduced an intermediate variable $T_{\text{itd}}(s', a') := \log \frac{p^\pi(a' | s')}{p^{\pi_t}(a' | s')} - \text{KL} ( p^\pi(\cdot | s') \| p^{\pi_t}(\cdot | s') ) + 1$. Now, using the law of total derivative, we obtain
\begin{align}
  \frac{\partial}{\partial \theta(s, a)} \frac{\partial \mathcal{C}_{\text{MDPO}}}{\partial \theta(s', a')} &= \sum_{x, b} \frac{\partial p^\pi(b | x)}{\partial \theta(s, a)} \times \frac{\partial}{\partial p^\pi(b | x)} \frac{\partial \mathcal{C}_{\text{MDPO}}}{\partial \theta(s', a')} \nonumber \\
  &= \sum_{x, b} \mathbb{I}(x = s) \Big[ \mathbb{I}(b = a) - p^\pi(a | x) \Big] p^\pi(b | x) \times \mathbb{I}(x = s') d^{\pi_t}(s') \nonumber \\
  & \qquad \quad \times \Bigg[ \mathbb{I}(b = a') \cdot T_{\text{itd}}(s', a') - p^\pi(a' | s') \left( \log \frac{p^\pi(b | s')}{p^{\pi_t}(b | s')} + 1 \right) \Bigg] \nonumber \\
  &= \mathbb{I}(s = s') d^{\pi_t}(s) \cdot T_{\text{aux}},
\end{align}
where the auxillary term $T_{\text{aux}}$ is 
\begin{align}
  T_{\text{aux}} &:= \sum_b \Big[ \mathbb{I}(b = a) - p^\pi(a | s) \Big] p^\pi(b | s) \Bigg[ \mathbb{I}(b = a') \cdot T_{\text{itd}}(s, a') - p^\pi(a' | s) \left( \log \frac{p^\pi(b | s)}{p^{\pi_t}(b | s)} + 1 \right) \Bigg] \\
  &= T_{\text{itd}}(s, a') \sum_{b} \mathbb{I}(b = a) p^\pi(b | s) \mathbb{I}(b = a') - p^\pi(a' | s) \sum_{b} \mathbb{I}(b = a) p^\pi(b | s) \left( \log \frac{p^\pi(b | s)}{p^{\pi_t}(b | s)} + 1 \right) \nonumber \\
  & \qquad - p^\pi(a | s) T_{\text{itd}}(s, a') \sum_{b} p^\pi(b | s) \mathbb{I}(b = a') + p^\pi(a' | s) p^\pi(a | s) \sum_{b} p^\pi(b | s) \left( \log \frac{p^\pi(b | s)}{p^{\pi_t}(b | s)} + 1 \right) \nonumber \\  
  &= T_{\text{itd}}(s, a') p^\pi(a | s) \mathbb{I}(a = a') - p^\pi(a' | s) p^\pi(a | s) \left( \log \frac{p^\pi(a | s)}{p^{\pi_t}(a | s)} + 1 \right) \nonumber \\
  & \qquad - p^\pi(a | s) T_{\text{itd}}(s, a') p^\pi(a' | s) + p^\pi(a' | s) p^\pi(a | s) \Big( \text{KL}( p^\pi(\cdot | s) \| p^{\pi_t}(\cdot | s) ) + 1 \Big) \nonumber \\
  &= \mathbb{I}(a = a') p^\pi(a | s) T_{\text{itd}}(s, a') - p^\pi(a' | s) p^\pi(a | s) \Big[ T_{\text{itd}}(s, a') + T_{\text{itd}}(s, a) \Big] + p^\pi(a' | s) p^\pi(a | s).
\end{align}
Therefore, 
\begin{align}
  \frac{\partial}{\partial \theta(s, a)} \frac{\partial \mathcal{C}_{\text{MDPO}}}{\partial \theta(s', a')} &= \mathbb{I}(s = s') d^{\pi_t}(s) \bigg[ \mathbb{I}(a = a') p^\pi(a | s) T_{\text{itd}}(s, a') - p^\pi(a | s) p^\pi(a' | s) T_{\text{itd}}(s, a') \nonumber \\
    & \qquad \qquad \qquad \qquad \quad - p^\pi(a' | s) p^\pi(a | s) T_{\text{itd}}(s, a) + p^\pi(a' | s) p^\pi(a | s) \bigg] \\
  \Rightarrow \qquad \qquad A^{\text{MDPO}}_{(s, :), (s, :)} &= d^{\pi_t}(s) \cdot \Big[ \text{diag} \big( T_{\text{vec}}(s) \big) - p^{\pi}(\cdot | s) T_{\text{vec}}(s)^\top \nonumber \\
    & \qquad \qquad \qquad - T_{\text{vec}}(s) p^{\pi}(\cdot | s)^\top + p^{\pi}(\cdot | s) p^{\pi}(\cdot | s)^\top \Big],
\end{align}
where we introduced yet another intermediate term $T_{\text{vec}}(s)$, defined as
\begin{align}
  T_{\text{vec}}(s) &:= p^{\pi}(\cdot | s) \odot T_{\text{itd}}(s, \cdot) \\
  &= p^{\pi}(\cdot | s) \odot \left[ \log \Big( p^\pi(\cdot | s) \oslash p^{\pi_t}(\cdot | s) \Big) - \text{KL} (p^\pi(\cdot | s) \| p^{\pi_t}(\cdot | s)) \mathbf{1}_{|\mathcal{A}|} + \mathbf{1}_{|\mathcal{A}|} \right],
\end{align}
and $\oslash$ in the above equation represents the elementwise vector division defined as $[a \oslash b]_i := a_i / b_i$ for any two vectors $a$ and $b$. As a sanity check, note that the matrix $A^{\text{MDPO}}$ is symmetric, as any Hessian matrix should be.
\newpage
\section{Additional experiments on MuJoCo environments}
\label{app:mujoco}
In this section, we present results on a series of MuJoCo environments where learning rate decay and gradient clipping have \emph{not} been applied.
\cref{fig:sppo_mujoco} shows that, while sPPO (in orange) still learns something, PPO is unable to make progress, regardless of the capping ($\epsilon$) and the number of inner loop steps ($m$), further reinforcing our intuition that the softmax paramaterization leads to a more robust optimization.

\begin{figure}[h]
    \centering
    \includegraphics[width=\textwidth]{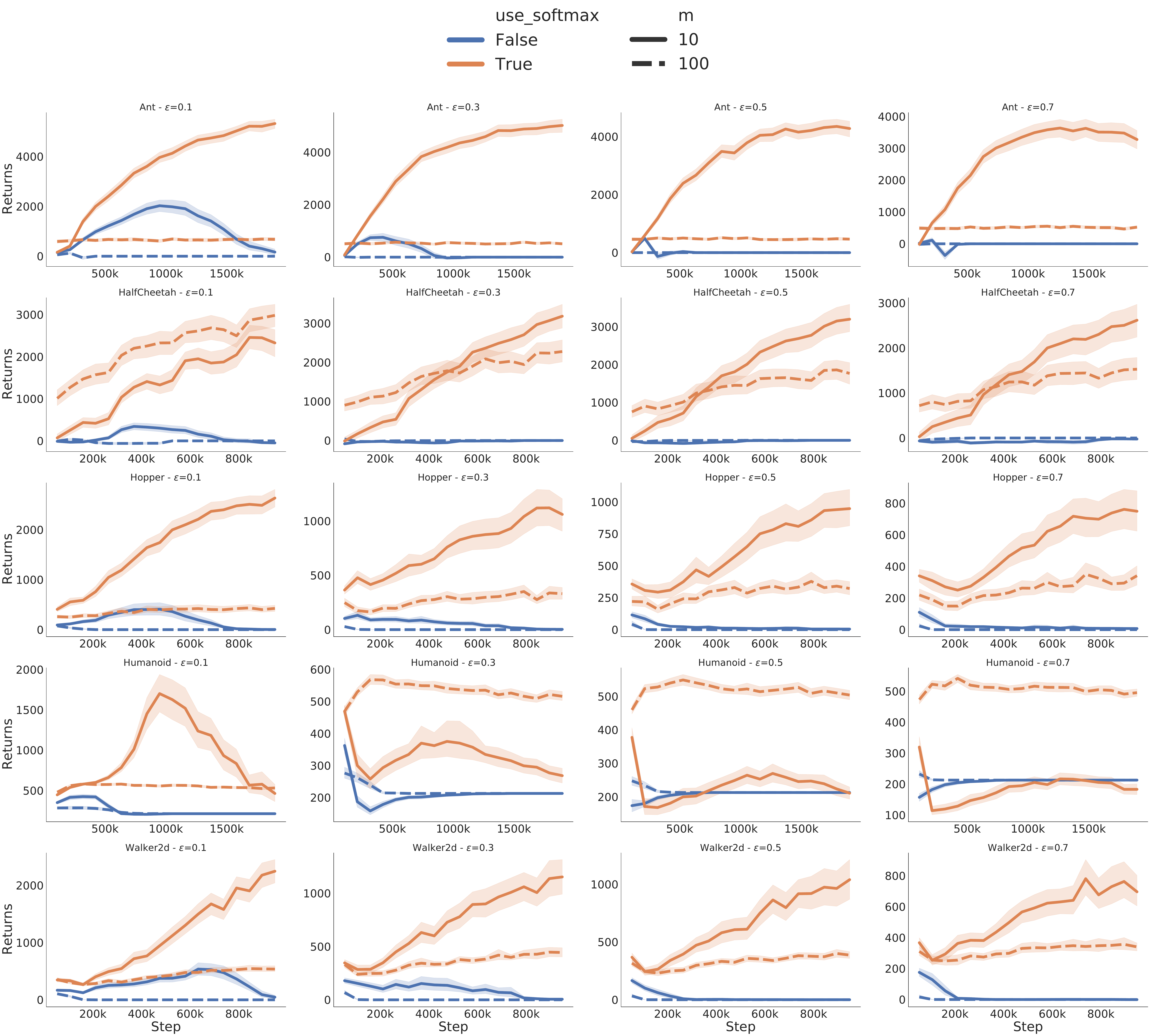}
    \caption{Average discounted return and 95\% confidence interval (over 180 runs) for {\color{blue}PPO} and {\color{orange}softmax PPO} on 4 environments (\emph{env} - rows) and for four different clipping strengths (\emph{epsilon} - columns). We see that sPPO is more robust to large values of clipping, even more so when the number of updates in the inner loop grows (linestyle).
    \label{fig:sppo_mujoco}}
\end{figure}

\end{document}